\newcommandx{\unsure}[2][1=]{\todo[linecolor=red,backgroundcolor=red!25,bordercolor=red,#1]{#2}}
\newcommandx{\change}[2][1=]{\todo[linecolor=blue,backgroundcolor=blue!25,bordercolor=blue,#1]{#2}}
\newcommandx{\info}[2][1=]{\todo[linecolor=OliveGreen,backgroundcolor=OliveGreen!25,bordercolor=OliveGreen,#1]{#2}}
\newcommandx{\improvement}[2][1=]{\todo[linecolor=Plum,backgroundcolor=Plum!25,bordercolor=Plum,#1]{#2}}
\newcommandx{\thiswillnotshow}[2][1=]{\todo[disable,#1]{#2}}
\def\R{\mathbb{R}}
\def\eps{\varepsilon}
\def\D{{\mathcal D}}
\def\G{{\mathcal G}}
\def\T{{\mathcal T}}
\def\X{{\mathcal X}}
\def\M{{\mathcal M}}
\def\S{{\mathcal S}}
\def\Z{{\mathcal Z}}
\def\U{{\mathcal U}}
\newtheorem{definition}{Definition}
\newtheorem{theorem}{Theorem}
\newtheorem{corollary}{Corollary}
\newtheorem{proposition}{Proposition}
\theoremstyle{remark}
\newtheorem{remark}{Remark}
\tikzstyle{RectObject}=[rectangle,fill=white,draw,line width=0.2mm]
\tikzstyle{line}=[draw]
\tikzstyle{arrow}=[draw, -latex]
\newcommand{\paren}[1]{\left(#1\right)}
\newcommand{\bracket}[1]{\left[#1\right]}
\newcommand{\set}[1]{\left\{#1\right\}}
\newcommand{\norm}[1]{\left\Vert#1\right\Vert}
\newcommand{\abs}[1]{\left|#1\right|}
\newcommand{\parenBar}[2]{\paren{#1{\left\Vert\,#2\right.}}}
\definecolor{DukeBlue}{HTML}{001A57}
\definecolor{DarkRed}{rgb}{0.75, 0.0, 0.0}
\definecolor{DarkGreen}{rgb}{0.0, 0.5, 0.0}
\begin{document}

\title{\LARGE \bf
A Tamper-Free Semi-Universal Communication System for Deletion Channels
}

\author{Shahab Asoodeh$^{1}$, Yi Huang$^{2}$, and Ishanu Chattopadhyay$^{3}$
\thanks{$^{1}$Computation Institute and Institute of Genomics and System Biology, The University of Chicago, Chicago, IL 60637 
{\tt \small shahab@uchicago.edu}}
\thanks{$^{2}$ The University of Chicago, Chicago, IL
        {\tt\small yhuang10@uchicago.edu}}%
\thanks{$^{3}$Computation Institute, Chicago, IL
        {\tt\small ishanu@uchicago.edu}}
}

\maketitle

\begin{abstract}
	We investigate the problem of reliable communication between two legitimate parties over deletion channels under an active eavesdropping (aka jamming) adversarial model. To this goal, we develop a theoretical framework based on probabilistic finite-state automata to define novel encoding and decoding schemes that ensure small error probability in both message decoding as well as tamper detecting. We then experimentally verify the reliability and tamper-detection property of our scheme.
\end{abstract}
	
\section{Introduction}
The deletion channel is the simplest point-to-point communication channel that models synchronization errors. In the simplest form, the inputs are either deleted independently with probability $\delta$  or transmitted over the channel noiselessly. As a result, the length of channel output is a random variable depending on $\delta$. Surprisingly, the capacity of deletion channel has been one of the outstanding open problems in information theory \cite{mitzenmacher2009}. A random coding argument for proving a Shannon-like capacity result for deletion channel (in general for all channels with synchronization errors) was given by Dobrushin \cite{Dob67} which is recently improved by Kirsch and Drinea \cite{deletion2} to derive several lower bounds. Readers interested in most recent results on deletion channels are referred to the recent survey by Mitzenmacher \cite{deletion1} that provides a useful history and known results on deletion channels. 

As the problem of computing capacity of deletion channels is infamously hard, we focus on another problem in deletion channels. In this paper, we study the behavior of the deletion channel under an \textit{active} eavesdropper attack. Secrecy models in information theory literature, initiated by Yamamoto \cite{yamamotoequivocationdistortion},  assume that there exists a \emph{passive} eavesdropper who can observe the symbols being transmitted over the channel. The objective is to design a pair of (randomized) encoder and decoder such that the message  is decoded with asymptotically vanishing error probability at the legitimate receiver while ensuring  that the eavesdropper gains negligible information about the message. In all secrecy models (see, e.g., \cite{Secure_lossless_compression, Secure_lossy_coding, Cover_State_Amplification, secrecy1, secrecy2, secrecy3, Asoodeh_Allerton2015}) the crucial assumption is that the eavesdropper can neither jam the communication channel between legitimate parties nor can she modify any messages exchanged between them. 
However, in many practical scenarios, the eavesdropper can potentially change the channel, for instance, add stronger noise to change the crossover probability of a binary symmetric channel  or the deletion probability of a deletion channel.

In our adversarial model, we assume that two parties (say Alice and Bob) wish to communicate over a public deletion channel while an eavesdropper (say Eve)  can potentially tamper the statistics of the channel. We focus on deletion channel and assume that Eve can have possibly more bits deleted, and hence increases the deletion probability of the channel. The objective is to allow a reliable communication between Alice and Bob (with vanishing error probability) regardless of the eavesdropper's action. To this goal, we design (i) a randomized encoder using probabilistic finite-state automata which, given a fixed message, generates a random vector as the channel input and (ii) a decoder which generates an estimate of the message \emph{only} when the channel is not tampered. In case the channel is indeed tampered, the decoder can declare it with asymptotically  small Type I and Type II error probabilities. It is worth mentioning that the \emph{rate} of our scheme is (almost) zero and hence we do not intend to study capacity of deletion channels.  
	
Unlike the classical channel coding where the set of all possible channel inputs (aka, codebook) must be available at the decoder, our scheme requires that only the set of PFSA's used in the encoder to be available at the decoder. This model, that we call \emph{semi-universal}, is contrasted with \emph{universal} channel coding \cite{universal1} where neither channel statistics nor codebook are known and the decoder is required to find the \emph{pattern} of the message.

The rest of the paper is organized as follows. In Section~\ref{sec:PFSA}, we discuss briefly the notion of PFSA and its properties required for our scheme. Section~\ref{Sec:ChannelMOdel} specifies the channel model, encoder, decoder, and different error events.  In Section~\ref{Sec:PFSA_DELETION}, we discuss the effects of deletion channels on PFSA. Section~\ref{Sec:MLA} concerns the thoeretical aspects of our coding scheme and Section~\ref{Alg&Simulation} contains several experimental results.

\noindent\textit{\underline{Notation}}
We use calligraphic uppercase letters for sets (e.g.~$\S$), sans serif font for functions (e.g.~$\mathsf{T}$), uppercase letters for matrices (e.g.~$\Gamma$), bold lower case letters for vectors (e.g.~$\mathbf{v}$). Throughout, we use $g$ to denote a PFSA and $s$ and $x$ to denote its state and symbol, respectively. We use $\mathbf{x}^n=x_1\dots x_n$ for a sequence of symbols or interchangeably, $\mathbf{x}$ if its size is clear in context. Also, $\mathbf{v}_{i}$ for $i$th entry of vector $\mathbf{v}$, $A_{i, \cdot}$ and $A_{\cdot, j}$ for the $i$th row or column of the matrix $A$, respectively. We use $(a_x)_{x\in\X}$ to denote a vector with the entry indexed by $x$ and  $(\mathbf{a}_x)_{x\in\X}$ a matrix with the column indexed by $x$. Finally, $\mathbf{x}^{i}=x_1x_2\dots x_i$.


\section{Probabilistic finite state automata}\label{sec:PFSA}
In this section, we introduce a new measure of similarity between two vectors. To do this, we first need to define probabilistic finite-state automata (PFSA). 
\begin{definition}[PFSA]
	A probabilistic finite-state automaton is a quadruple $(\mathcal{S}, \mathcal{X}, \mathsf{T}, \mathsf{P})$, where $\mathcal{S}$ is a finite state space, $\mathcal{X}$ is a finite alphabet with 	$K=|\mathcal{X}|$, $\mathsf{T}:\mathcal{S}\times \mathcal{X}\to \mathcal{S}$ is the state transition function, and $\mathsf{P}:\mathcal{S}\times \mathcal{X}\to [0,1]$ specifies the conditional distribution of generating a symbol conditioned on the state.    
\end{definition}
In fact, a PFSA is a directed graph with a finite number of vertices (i.e., states) and directed edges emanating from each vertex to the other. An edge from state $s_1\in \mathcal{S}$ to state $s_2\in \mathcal{S}$ is specified by two labels: (i) a symbol $x\in \mathcal{X}$ that updates the current state from $s_1$ to $s_2$, that is, $\mathsf{T}(s_1, x)=s_2$, and (ii) the probability of generating $x$ when the system resides in state $s_1$, i.e., $\mathsf{P}(s_1, x)$. For instance, $\mathsf{P}(s_1, 1)=0.7$ in the PFSA described in Fig. \ref{fig:PFSA_M4},  thus, the system residing in states $s_1$ evolve to state $s_2$ with probability $0.7$ and it generates symbol $1$. Clearly, $\sum_{x\in \X}\mathsf{P}(s, x)=1$ for all $s\in \S$. 
\begin{figure}[t]
	\centering
    \includegraphics[scale=.8]{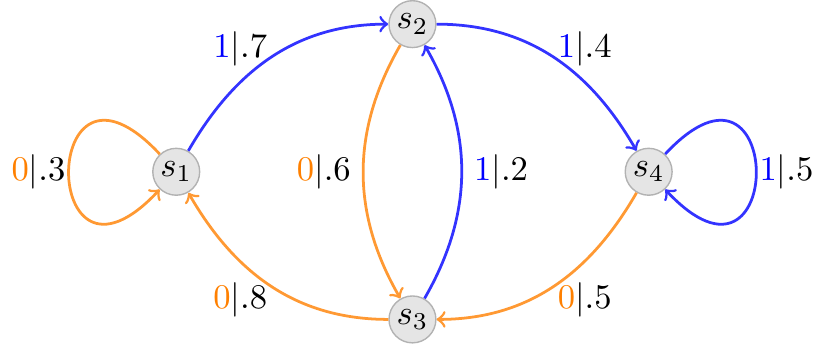}
	\caption{A PFSA with $\mathcal{S}=\{s_1, s_2, s_3, s_4\}$ and $\mathcal{X}=\{0, 1\}$.}
    \label{fig:PFSA_M4}
\end{figure}
Given two symbols $x_1$ and $x_2$, one can define the transition function for the concatenation $x_1x_2$ as $\mathsf{T}(s, x_1x_2)=\mathsf{T}(\mathsf{T}(s, x_1), x_2)$. Letting $\X^*$ denote the set of all possible concatenation of finitely many symbols from $\X$, one can easily proceed to define  $\mathsf{T}(s, \mathbf{x})$ as above for each  $\mathbf{x}\in \X^*$ and $s\in \S$. We say that a PFSA is \emph{strongly connected} if for any pair of distinct states $s_i$ and $s_j$, there exists a sequence $\mathbf{x}\in \X^*$ such that $\mathsf{T}(s_i, \mathbf{x})=s_j$. Let $\G$ be the set of all strongly connected PFSAs. The significance of strongly connected PFSAs is that their corresponding Markov chains (i.e., the Markov chain with state space $\S$ and transition matrix $P=\left[P(i, j)\right]_{|\S|\times |\S|}$ whose entry is $P(i, j)=\sum_{x\in \X: \mathsf{T}(s_i, x)=s_j}\mathsf{P}(s_i, x)$) has a unique stationary distribution (thus initial state can be assumed to be irrelevant).
\begin{definition}[$\Gamma$-expression for PFSA]
We notice that a PFSA $g$ is uniquely determined by $\mathbf{\Gamma}_{g}=(\Gamma_{g,x})_{x\in\X}$ given by 
\[
    \paren{\Gamma_{g,x}}_{i,j} = \left\{
    \begin{array}{ll}
        \mathsf{P}_g(s_i, x), & \qquad\mathsf{T}_g(s_i, x) = s_j,\\
        0, & \qquad\textrm{otherwise}.
    \end{array}
    \right.
\]
The \textit{state-to-state transition matrix} $P_g$ is defined as
\begin{equation}\label{Eq:State-StateTra}
    P_g=\sum_{x\in\X}\Gamma_{g,x},
\end{equation}

and the \textit{state-to-symbol transition matrix} $\widetilde{P}_g$ is given by
\[
    \widetilde{P}_g = (\Gamma_{g,x}\mathbf{1}_{|\S|})_{x\in\X},
\]
where $\mathbf{1}_n$ is the length-$n$ all-one vector. 
\end{definition}
For the PFSA illustrated in Fig.~\ref{fig:PFSA_M4}, we have
\[
    \Gamma_{g, 0} = \begin{pmatrix}
        .3 & 0 & 0 & 0\\
        0 & 0 & .6 & 0\\
        .8 & 0 & 0 & 0\\
        0 & 0 & .5 & 0
    \end{pmatrix}, \quad \Gamma_{g, 1} = \begin{pmatrix}
        0 & .7 & 0 & 0\\
        0 & 0 & 0 & .4\\
        0 & .2 & 0 & 0\\
        0 & 0 & 0 & .5
    \end{pmatrix},
\]
\[
    P_g = \begin{pmatrix}
        .3 & .7 & 0 & 0\\
        0 & 0 & .6 & .4\\
        .8 & .2 & 0 & 0\\
        0 & 0 & .5 & .5
    \end{pmatrix}, ~~~\text{and}~~~\quad
    \widetilde{P}_g = \begin{pmatrix}
        .3 & .7\\
        .6 & .4\\
        .8 & .2\\
        .5 & .5
    \end{pmatrix}.
\]

\begin{definition}[Generalized PFSA]
Generalized PFSA is a PFSA $g$ whose $\Gamma_{g, x}$ can have more than one non-zero (positive) entries.  In this case, we still have
\[
    \paren{\Gamma_{g,x}\mathbf{1}_{|\S|}}_{i} = \mathsf{P}_g(s_i, x).
\]
 However, $\mathsf{T}(s_i, x)$ might not be deterministic, and instead it is a probability distribution.
\end{definition}

Shannon \cite{Shannon} appears to be the first one who made use of PFSAs to describe stationary and ergodic sources. Given $g\in \G$, first a state $s_1$ is chosen randomly according to the stationary distribution, then a symbol $x_1$ is generated with probability $\mathsf{P}(s_1,x_1)$ which takes the system from state $s_1$ to state $s_2$. A new symbol $x_2$ is then generated with probability $\mathsf{P}(s_2,x_2)$. Letting this process run for $n$ time units, we obtain a sequence $x_1, x_2, \dots, x_n$. In this case, we say that $x_1, x_2, \dots, x_n$ is a \textit{realization} of $g$. According to Shannon, each state $s_i$ captures the "residue of influence" of the preceding symbol $x_{i-1}$ on the system.

For $\mathbf{x}\in \X^*$, we denote by $\mathbf{x}\leftarrow g$ the fact that $g\in \G$ generates $\mathbf{x}$. 
\begin{figure}[h!]
	\centering
	\includegraphics[scale = 1.]{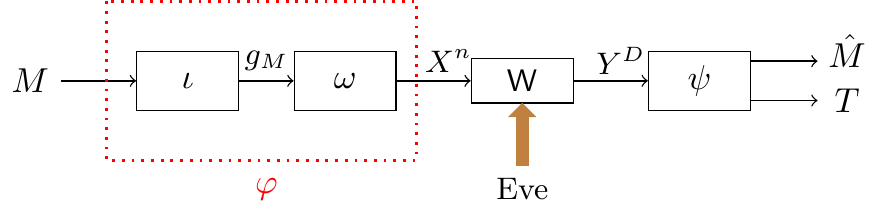}
	\caption{A communication system with an active eavesdropper}
	\label{Fig1}
\end{figure}

\section{System Model and Setup}\label{Sec:ChannelMOdel}
Suppose Alice has a message $M$ which takes value in a finite set $\M\coloneqq \{1, 2, \dots, |\M| \}$ and seeks to transmit it reliably to Bob over a  deletion channel  $\mathsf{W}(\delta)$ with deletion probability $\delta\in [0,1]$. The communication channel is assumed to be  public, that is, an active eavesdropper, say Eve, can access and possibly tamper the channel. For simplicity, we assume that Eve may delete extra bits and thus changing the channel from $\mathsf{W}(\delta)$ to $\mathsf{W}(\delta')$ with $\delta'\geq \delta$.

\newcommand{\ccirc}{\mathbin{\mathchoice
		{\xcirc\scriptstyle}
		{\xcirc\scriptstyle}
		{\xcirc\scriptscriptstyle}
		{\xcirc\scriptscriptstyle}
}}
\newcommand{\xcirc}[1]{\vcenter{\hbox{$#1\circ$}}}

The objective is to design a pair of encoder $\varphi$ and decoder $\psi$ that enables Alice and Bob to reliably communicate over $\mathsf{W}(\delta)$ \emph{only} when he is ensured that the channel is not tampered. In classical information theory, the decoder must be tuned with the channel statistics. Hence, reliable communication occurs only when Bob knows the deletion probability $\delta$. However, Eve might have tampered the channel and increased deletion probability to $\delta'$, and since Bob's decoding policy was tuned to $\delta$, this might cause a decoding error --regardless of Bob's decoding algorithm. Therefore, reliability of the decoding must be always conditioned on the fact that the channel has not been tampered during communication. 

Motivated by this observation, we propose the following coding scheme.  We first propose a two-step encoder: each  message $M=m$ is first sent to a function $\iota:\M\to \G_\M$ which maps  $m$ to a PFSA $g_m$  in $\G_\M\coloneqq\{g_1, \dots, g_{|\M|}\}$, then  another function $\omega:\G_\M\to \X^n$ generates $\mathbf{y}^n$ a realization of PFSA $g_m$ and sends it over the memoryless channel $\mathsf{W}(\delta)$. Therefore, the encoder function $\varphi:\M\to \X^n$ is the composition $\iota\ccirc \omega$ (see Fig. \ref{Fig1}). Unlike the classical setting, Bob need not know the set of all channel inputs $\mathbf{y}^n$ for each $m\in \M$ (aka codebook). Instead, we assume Bob knows $\G_\M$ (thus the name \emph{semi-universal scheme}). The output of the channel $\mathbf{x}^D$ is an $\X$-valued random vector whose length $D$ is a binomial random variable $\mathsf{Bin}(n, 1-\delta)$ (corresponding to how many elements of $\mathbf{y}^n$ are deleted).  Upon receiving $\mathbf{x}^D$, Bob  applies $\psi:\X^*\to \M\times \{0,1\}$ to generate $\psi(\mathbf{x}^D)=(\hat{M}, T)$ where $\hat{M}$ is an estimate of Alice's message  and $T$ specifies whether or not the channel has been tampered. He then declares $\hat{M}$ as the message only when $T=0$. Therefore, the goal is to design $(\varphi, \psi)$ such that for sufficiently large $n$
\begin{equation}
\Pr(T=0 ~|~ \text{channel~is~tampered})+\Pr(T=1 ~|~ \text{channel~is~not~tampered})<\eps \label{Tamper_Prob},
\end{equation}
and simultaneously
\begin{equation}\label{Error_Prob}
\Pr(M\neq \hat{M}|T=0)\leq \eps,
\end{equation}
for any uniformly chosen message $M\in \M$. 
We say that the reliable tamper-free communication is possible if \eqref{Tamper_Prob} and  \eqref{Error_Prob} hold simultaneously for any $\eps>0$.

\section{PFSA through deletion channel}\label{Sec:PFSA_DELETION}
In this section, we study the channel effect on PFSA's by monitoring the change of the likelihood of $\mathbf{x}$ being generated by a PFSA at the channel output. To do this, we first study the likelihood when $\delta = 0$ in Section \ref{subsec:PFSA_without_Deletion}, and then move on to the case of positive $\delta$ in Section \ref{subsec:PFSA_with_Deletion}. One of the main results in this section is to show that the output of $\mathsf{W}(\delta)$ (i.e., $\mathbf{x}^D$) can be equivalently generated by a \emph{generalized} PFSA $g(\delta)$ whose $\Gamma$ and state-to-state transition matrix follow simple closed forms (cf. Theorem \ref{thm:generalizedPFSA}). In section \ref{subsec:propertyGeneralizedPFSA}, we discuss some basic properties of $g(\delta)$ that will be useful for later development. We conclude this section by introducing the class M2 of PFSAs which is closed under deletion. For notational brevity, we remove the subscript $g$ when it is is clearly understood from context. 

\subsection{PFSA over $\mathsf{W}(0)$: no deletion}\label{subsec:PFSA_without_Deletion}
Let a sequence of symbols $\mathbf{x} = x_1\dots x_n \in\X^{*}$ be given. We define   $p_g(\mathbf{x})$ (or simply $p(\mathbf{x})$) to be the probability that $g$ generates $\mathbf{x}$. Then we have 
\[
    p(\mathbf{x}^{n}) = p(x_1)p\paren{x_2|\mathbf{x}^1}\cdots p\paren{x_n|\mathbf{x}^{n-1}},
\]
where $p\paren{x_i|\mathbf{x}^{i-1}}$ is the conditional probability of $g$ generating $x_i$ given that $g$ generated $\mathbf{x}^{i-1}$. It is clear from section~\ref{sec:PFSA} that 
\begin{align}\label{Def_Likelihood}
    &\mathbf{p}_0 = \mathbf{p}\\
    p(x_1) = \paren{\mathbf{p}_0^{T}\widetilde{P}}_{x_1}, &~\mathbf{p}^T_1 = \frac{\mathbf{p}_0^{T}\Gamma_{x_1}}{\norm{\mathbf{p}_0^{T}\Gamma_{x_1}}_1},\nonumber\\
    p\paren{x_2|x_1} = \paren{\mathbf{p}_1^{T}\widetilde{P}}_{x_2}, & ~\mathbf{p}^T_2 = \frac{\mathbf{p}^{T}_1\Gamma_{x_2}}{\norm{\mathbf{p}^{T}_1\Gamma_{x_2}}_1},\nonumber\\
    & \vdots\nonumber\\
    p\paren{x_{n-1}|\mathbf{x}^{n-2}} = \paren{\mathbf{p}_{n-2}^{T}\widetilde{P}}_{x_{n-1}}, & ~\mathbf{p}^T_{n-1} = \frac{\mathbf{p}_{n-2}^{T}\Gamma_{x_{n-1}}}{\norm{\mathbf{p}_{n-2}^{T}\Gamma_{x_{n-1}}}_1},\nonumber
\end{align}
and finally, $p\paren{x_n|\mathbf{x}^{n-1}} =\paren{\mathbf{p}_{n-1}^{T}\widetilde{P}}_{x_n}$, where $T$ denotes matrix transpose.


It is clear from the above update rule that any sequence $\mathsf{x^n}$ induces two  probability distribution: one on the state space $\S$, i.e., $\mathbf{p}_{n}$ and the other one on $\X$. Let denote the former by $\mathbf{p}_g(\mathbf{x})$ and the latter by $\D_g(\mathbf{x})$.  Update rules in \eqref{Def_Likelihood} imply that $\D_g(\mathbf{x})= \mathbf{p}^T_g(\mathbf{x})\widetilde{P}_g$ and $\mathbf{p}^{T}_g(\mathbf{x}x) \propto \mathbf{p}^{T}_g(\mathbf{x})\Gamma_{g,x}$. More precisely, since
\begin{equation*}
    \norm{\mathbf{p}^{T}_g(\mathbf{x})\Gamma_{g,x}}_1 = \mathbf{p}^{T}_g(\mathbf{x})\Gamma_{g,x}\mathbf{1}_{|\S|} = \mathbf{p}^{T}_g(\mathbf{x})\paren{\widetilde{P}_g}_{\cdot, x}=\paren{\mathbf{p}^{T}_g(\mathbf{x})\widetilde{P}_g}_{x} = p(x|\mathbf{x}),
\end{equation*}
we have 
\begin{equation}\label{eq:inductionDistrOfState}
    p^T(x|\mathbf{x})\mathbf{p}_g(\mathbf{x}x) = \mathbf{p}^T_g(\mathbf{x})\Gamma_{g,x}.
\end{equation}
We also call $\D_g(\mathbf{x}) = \paren{p_g(x|\mathbf{x})}_{x\in\X}$ the \emph{symbolic derivative} of $g$ induced by $\mathbf{x}$.

\subsection{PFSA over $W(\delta)$: deletion with probability $\delta>0$}\label{subsec:PFSA_with_Deletion}
Now we move forward to investigate the effect of deletion probability on PFSA transmission. The following result is a ket for our analysis.

\begin{theorem}\label{thm:generalizedPFSA}
Let $\mathbf{y}\leftarrow g$ be a channel input and $\mathbf{x}$ be a channel output with positive deletion probability $\delta$. Then $\mathbf{x}\leftarrow g(\delta)$, where $g(\delta)$ is a generalized PFSA identified by $\Gamma_{g, x, \delta} = Q(P, \delta)\Gamma_{g, x}$ for all $x\in\X$, where $P$ is the state-to-state transition matrix of $g$ and $Q$ is as defined in \eqref{eq:Q}.
\end{theorem}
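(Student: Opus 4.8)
The plan is to view a realization $\mathbf{y}\leftarrow g$ together with its passage through $\mathsf{W}(\delta)$ as a single stochastic process, and to track the state of $g$ only at the epochs immediately following a symbol that survives the channel. Recall from Section~\ref{subsec:PFSA_without_Deletion} that generating $\mathbf{y}\leftarrow g$ amounts to running the Markov chain with transition matrix $P$ on $\S$, where each step simultaneously emits a symbol and performs a state transition jointly encoded by $(\Gamma_{g,x})_{x\in\X}$: emitting $x$ from $s_i$ and moving to $s_j$ has probability $\paren{\Gamma_{g,x}}_{i,j}$. The deletion channel erases each emitted symbol independently with probability $\delta$, and these erasures are independent of the symbol-generation process. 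I would let $s_i$ denote the state of $g$ immediately after the most recently surviving symbol, and then compute the joint law of the pair consisting of the next surviving symbol $x$ and the state $s_j$ reached right after it.

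First I would condition on the number $m\ge 0$ of symbols deleted between the current surviving symbol and the next one. Because deletions are i.i.d.\ $\mathsf{Bernoulli}(\delta)$ and independent of the emissions, the probability of exactly $m$ deletions followed by a surviving symbol is $\delta^m(1-\delta)$. During the $m$ deleted steps the chain moves according to $P$ (summing over the identities of the deleted symbols uses $\sum_{x\in\X}\Gamma_{g,x}=P$ from \eqref{Eq:State-StateTra}), while the final surviving step contributes $\Gamma_{g,x}$. Hence the $m$-deletion contribution to the joint probability of going from $s_i$ to $s_j$ while observing $x$ is $\delta^m(1-\delta)\paren{P^m\Gamma_{g,x}}_{i,j}$, where the intermediate state from which $x$ is emitted is marginalized out by the matrix product.

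Summing over all $m\ge 0$ yields the effective transition matrix
\begin{equation*}
\Gamma_{g,x,\delta}=(1-\delta)\sum_{m=0}^{\infty}(\delta P)^m\,\Gamma_{g,x}=(1-\delta)(I-\delta P)^{-1}\Gamma_{g,x},
\end{equation*}
the Neumann series converging because $P$ is stochastic, so $\delta P$ has spectral radius $\delta<1$ and $I-\delta P$ is invertible. This identifies the matrix $(1-\delta)(I-\delta P)^{-1}$ with the $Q(P,\delta)$ appearing in the statement and gives $\Gamma_{g,x,\delta}=Q(P,\delta)\Gamma_{g,x}$. I would then verify that $g(\delta)$ is a legitimate generalized PFSA: its state-to-state matrix is $\sum_{x\in\X}\Gamma_{g,x,\delta}=Q(P,\delta)P$, and using $P\mathbf{1}_{|\S|}=\mathbf{1}_{|\S|}$ one gets $(I-\delta P)^{-1}\mathbf{1}_{|\S|}=(1-\delta)^{-1}\mathbf{1}_{|\S|}$, whence $Q(P,\delta)P\,\mathbf{1}_{|\S|}=\mathbf{1}_{|\S|}$, so the rows sum to one. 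Since $Q(P,\delta)$ is generally a full positive matrix, each $\Gamma_{g,x,\delta}$ acquires several nonzero entries per row, which is precisely why $g(\delta)$ is generalized rather than ordinary.

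The step I expect to be the main obstacle is the first one: rigorously justifying that recording only the post-survival state of $g$ produces a genuine generalized PFSA, i.e.\ that the output sequence is Markov with transition matrices $\Gamma_{g,x,\delta}$ and that a single initial distribution suffices. This rests on combining the Markov property of the state chain of $g$ with the independence of the deletions from the emissions, so that conditioned on the current post-survival state the future is independent of the entire past; the same argument shows the initial distribution $\mathbf{p}$ of $g$ can be reused for $g(\delta)$, since the law of the first surviving symbol and the state after it is $\mathbf{p}^{T}Q(P,\delta)\Gamma_{g,x}$. Once this Markov reduction is in place, the geometric summation above is routine.
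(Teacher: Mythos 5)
Your proof is correct and follows essentially the same route as the paper's: both condition on the number $m$ of symbols deleted between consecutive surviving symbols, use independence of the deletions from the emissions to get the $\delta^m(1-\delta)P^m\Gamma_{g,x}$ contribution, and sum the Neumann series to obtain $Q(P,\delta)\Gamma_{g,x}$. The only difference is cosmetic: you package the argument directly at the level of the joint symbol-and-state transition matrices $\Gamma_{g,x,\delta}$ (and explicitly address the Markov reduction and row-stochasticity, which the paper leaves implicit), whereas the paper derives the conditional symbol probability $p(x_i|\mathbf{x}^{i-1})$ and the state-distribution update separately and then compares with the $\delta=0$ recursion.
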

\begin{proof}
Assume Bob has observed $\mathbf{x}^{i-1}$. Then we have
\begin{eqnarray*}
    p(x_i|\mathbf{x}^{i-1}) &=& (1-\delta)\paren{\mathbf{p}^T_{i-1}\widetilde{P}}_{x_i}+\delta(1-\delta)\paren{\mathbf{p}^T_{i-1}P\widetilde{P}}_{x_i}+\delta^2(1-\delta)\paren{\mathbf{p}^T_{i-1}P^2\widetilde{P}}_{x_i}+\cdots\\
    &=&(1-\delta)\paren{\mathbf{p}^T_{i-1}\paren{\sum_{i = 0}^{\infty}\delta^{i}P^{i}}\widetilde{P}}_{x_i}\\
    &=& \paren{\mathbf{p}^T_{i-1}Q(P,\delta)\widetilde{P}}_{x_i},
\end{eqnarray*}
where 
\begin{equation}\label{eq:Q}
    Q(P, \delta) = (1-\delta)\sum_{i = 0}^{\infty}\delta^iP^{i} = (1-\delta)\paren{I-\delta P}^{-1}.
\end{equation}
Analogous to \eqref{Def_Likelihood}, we can define the follwoing distribution induced on $\S$
\begin{equation}\label{Distri_Deletion}
    \mathbf{p}_{i} = \frac{\mathbf{p}^T_{i-1}Q(P,\delta)\Gamma_{x_i}}{\norm{\mathbf{p}^T_{i-1}Q(P,\delta)\Gamma_{x_i}}_1}.
\end{equation}
Comparing \eqref{Distri_Deletion} with expressions $\mathbf{p}_i$ in \eqref{Def_Likelihood}, the result follows. 
\end{proof}

\begin{remark}
Notice that while the row-stochastic matrix $P$ may not be invertible, $I-\delta P$ is non-singular for all $\delta\in [0,1)$, as the the eigenvalues of $P$ are less than or equal to $1$. Moreover, it is clear from \eqref{eq:Q} that $Q(P, \delta)$  is also a row-stochastic matrix with $\mathbf{p}$ being its eigenvector corresponding to eigenvalue one. We will give a closer look at the eigenvalues of $Q(P, \delta)$ in the next section.
\end{remark}

\subsection{Properties of the generalized PFSA}\label{subsec:propertyGeneralizedPFSA}
We start by analyzing the eigenspace of the state-to-state transition matrix of $g(\delta)$. Note that it follows from \eqref{Eq:State-StateTra} that $P_{g(\delta)} = Q\paren{P_g, \delta}P_g$. 

\begin{theorem}\label{Thm:DEltaMachine}
Let $\mathbf{p}_g$ be the stationary distribution of strongly connected  $g$. Then the generalized PFSA $g(\delta)$ is also strongly connected with stationary distribution $\mathbf{p}_{g(\delta)} = \mathbf{p}_g$.
\end{theorem}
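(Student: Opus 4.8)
The plan is to lean on the factorization $P_{g(\delta)} = Q\paren{P_g, \delta}P_g$ recorded immediately before the statement and to treat the two assertions — stationarity of $\mathbf{p}_g$ and strong connectivity of $g(\delta)$ — separately. Throughout I write $Q = Q(P_g,\delta)$ and use the convention $\mathbf{p}_g^T P_g = \mathbf{p}_g^T$ for the stationary distribution.

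For stationarity, the first step is to show that $\mathbf{p}_g$ is a left eigenvector of $Q$ with eigenvalue one — the fact already flagged in the preceding remark. Since $\mathbf{p}_g^T P_g = \mathbf{p}_g^T$ forces $\mathbf{p}_g^T P_g^{i} = \mathbf{p}_g^T$ for every $i\geq 0$, the series definition $Q = (1-\delta)\sum_{i\geq 0}\delta^{i}P_g^{i}$ gives $\mathbf{p}_g^T Q = (1-\delta)\sum_{i\geq 0}\delta^{i}\mathbf{p}_g^T = \mathbf{p}_g^T$. Substituting into the factorization yields $\mathbf{p}_g^T P_{g(\delta)} = \mathbf{p}_g^T Q P_g = \mathbf{p}_g^T P_g = \mathbf{p}_g^T$, so $\mathbf{p}_g$ is stationary for $P_{g(\delta)}$. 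Because $Q$ and $P_g$ are both row-stochastic, their product $P_{g(\delta)}$ is row-stochastic as well, so this is a genuine stationary distribution rather than merely a fixed vector.

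For strong connectivity, the crux is to argue that $Q$ is entrywise strictly positive when $\delta\in(0,1)$. Strong connectivity of $g$ means that for each ordered pair $(i,j)$ there is a symbol string steering $s_i$ to $s_j$ along edges of positive probability, which is exactly the statement that $(P_g^{k})_{i,j}>0$ for some $k=k(i,j)\geq 0$; that is, $P_g$ is irreducible. Since every summand $\delta^{i}P_g^{i}$ is nonnegative and at least one is positive for each $(i,j)$, the sum $Q$ has all entries strictly positive. Then $P_{g(\delta)} = Q P_g$ is itself entrywise positive, because every column of $P_g$ contains a positive entry (each state of an irreducible chain has an incoming transition) while every entry of $Q$ is positive. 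A positive matrix is irreducible, so $g(\delta)$ is strongly connected, and irreducibility in turn makes the stationary distribution unique via Perron--Frobenius, confirming it equals $\mathbf{p}_g$.

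The main obstacle I anticipate is not the algebra but carefully bridging the combinatorial definition of a \emph{strongly connected PFSA} (phrased through the transition function $\mathsf{T}$) with the linear-algebraic statement that $P_g$ is irreducible, and then transporting the right notion of strong connectivity to the \emph{generalized} PFSA $g(\delta)$ through its state-to-state matrix $P_{g(\delta)}$. A secondary point worth stating explicitly is the restriction $\delta\in(0,1)$: at $\delta=0$ one has $Q=I$ and $P_{g(\delta)}=P_g$, so the claim is immediate, whereas the argument must remain inside $[0,1)$ since $I-\delta P_g$ degenerates as $\delta\to 1$.
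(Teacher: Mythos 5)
Your proof is correct, but it takes a genuinely different route from the paper's. The paper argues spectrally: it observes that $P_{g(\delta)}$ is a power series in $P_g$, so every eigenvalue $\lambda$ of $P_g$ maps to $f(\lambda,\delta)=\lambda(1-\delta)(1-\delta\lambda)^{-1}$, and it reads off the theorem from the facts that $f(1,\delta)=1$ (so the Perron eigenvector, i.e.\ $\mathbf{p}_g$, is preserved) and $f(\lambda,\delta)<\lambda$ for $\lambda<1$. You instead work entrywise: the identity $\mathbf{p}_g^T Q(P_g,\delta)=\mathbf{p}_g^T$ from the power series gives stationarity directly, and irreducibility of $P_g$ makes $Q(P_g,\delta)$ strictly positive for $\delta\in(0,1)$, hence $P_{g(\delta)}=Q(P_g,\delta)P_g$ is a positive matrix and Perron--Frobenius gives both strong connectivity and uniqueness. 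Your argument is the more complete one on the connectivity claim --- the paper's eigenvalue observations do not by themselves establish irreducibility of $P_{g(\delta)}$, whereas your positivity argument yields the strictly stronger conclusion that $P_{g(\delta)}$ is primitive; you also rightly flag the gap between strong connectivity defined through $\mathsf{T}$ and irreducibility of $P_g$ (which requires the traversed edges to carry positive probability), a point the paper elides. What the paper's spectral route buys in exchange is the quantitative decay $f(\lambda,\delta)\to 0$ for $\lambda<1$ as $\delta\uparrow 1$, which is reused immediately afterwards to show $\lim_{\delta\to 1}Q(P_g,\delta)=\mathbf{1}_{|\S|}\mathbf{p}_g^T$ and hence that $g(1)$ is a single-state machine; your argument does not produce that limit.
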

\begin{proof}
Let $\lambda$ be an eigenvalue of $P_g$. Then  $\lambda(1-\delta)(1-\delta\lambda)^{-1}$ is an eigenvalue of $P_{g(\delta)}$. Define $f(\lambda, \delta) = \lambda(1-\delta)(1-\delta\lambda)^{-1}$. Then the result follows from the following observations:
\begin{enumerate}
    \item For $\lambda = 1$, $f(\lambda, \delta) = 1$ for all $\delta \in [0, 1)$, and hence $\lim_{\delta\rightarrow1}f(1, \delta) = 1$.
    \item  For $\lambda < 1$, $f(\lambda, \delta) < \lambda$ for all $\delta \in [0, 1)$, and furthermore, $\lim_{\delta\rightarrow1}f(\lambda, \delta) = 0$. \qedhere
\end{enumerate}
\end{proof}

Then following is an immediate corollary. 
\begin{corollary}
We have for all $x\in \X$
$$p_g(x)=p_{g(\delta)}(x).$$
\end{corollary}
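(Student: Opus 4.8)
The plan is to read off the marginal symbol probabilities from the stationary state distributions of $g$ and $g(\delta)$ and to show that the extra factor $Q(P,\delta)$ introduced by deletion is absorbed by stationarity. First I would fix the interpretation: $p_g(x)$ and $p_{g(\delta)}(x)$ denote the probabilities of emitting the symbol $x$ when the initial state is drawn from the stationary distribution. With that convention, the update rules of \eqref{Def_Likelihood} give the symbol probability as the corresponding entry of the symbolic distribution at stationarity, namely
\begin{equation*}
p_g(x) = \paren{\mathbf{p}_g^T \widetilde{P}_g}_x = \mathbf{p}_g^T \Gamma_{g,x}\mathbf{1}_{|\S|},
\end{equation*}
where the last equality uses $\paren{\widetilde{P}_g}_{\cdot,x} = \Gamma_{g,x}\mathbf{1}_{|\S|}$ from the definition of the state-to-symbol matrix.

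Next I would write the analogous expression for $g(\delta)$. By Theorem~\ref{Thm:DEltaMachine} the generalized PFSA $g(\delta)$ shares the stationary distribution $\mathbf{p}_g$, and by Theorem~\ref{thm:generalizedPFSA} its $\Gamma$-matrices are $\Gamma_{g,x,\delta} = Q(P,\delta)\Gamma_{g,x}$. Hence its state-to-symbol column is $\Gamma_{g,x,\delta}\mathbf{1}_{|\S|} = Q(P,\delta)\Gamma_{g,x}\mathbf{1}_{|\S|}$, and
\begin{equation*}
p_{g(\delta)}(x) = \mathbf{p}_g^T Q(P,\delta)\Gamma_{g,x}\mathbf{1}_{|\S|}.
\end{equation*}
Comparing the two displays, the claim reduces to the single identity $\mathbf{p}_g^T Q(P,\delta) = \mathbf{p}_g^T$.

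This identity is exactly the statement, already noted in the Remark following Theorem~\ref{thm:generalizedPFSA}, that $\mathbf{p}_g$ is a left eigenvector of $Q(P,\delta)$ for eigenvalue one; I would establish it directly from the Neumann series in \eqref{eq:Q}. Since $\mathbf{p}_g$ is stationary for $P$ we have $\mathbf{p}_g^T P^i = \mathbf{p}_g^T$ for every $i\geq 0$, so
\begin{equation*}
\mathbf{p}_g^T Q(P,\delta) = (1-\delta)\sum_{i=0}^{\infty}\delta^i\, \mathbf{p}_g^T P^i = (1-\delta)\sum_{i=0}^{\infty}\delta^i\, \mathbf{p}_g^T = \mathbf{p}_g^T.
\end{equation*}
Substituting this back collapses the $Q(P,\delta)$ factor and yields $p_{g(\delta)}(x) = \mathbf{p}_g^T\Gamma_{g,x}\mathbf{1}_{|\S|} = p_g(x)$. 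I do not anticipate a genuine obstacle: the corollary is essentially a one-line consequence of the two preceding theorems once both symbol probabilities are expressed through the common stationary distribution, and the only technical point is the termwise action of $\mathbf{p}_g^T$ on the series, which is justified because $\delta<1$ makes the series absolutely convergent and each $P^i$ is row-stochastic.
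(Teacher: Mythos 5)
Your proof is correct and follows essentially the same route as the paper: both express the symbol probability as $\mathbf{p}^T\widetilde{P}$, use Theorem~\ref{Thm:DEltaMachine} to identify the stationary distributions, and collapse the factor $Q(P,\delta)$ via the left-eigenvector identity $\mathbf{p}_g^T Q(P,\delta)=\mathbf{p}_g^T$. The only difference is that you justify that identity explicitly from the Neumann series, whereas the paper invokes it as already noted in the remark.
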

\begin{proof}
We have
\[
    \mathbf{p}_{g(\delta)}^{T}\widetilde{P}_{g(\delta)} = \mathbf{p}_g^{T}\widetilde{P}_{g(\delta)} = \mathbf{p}_g^{T}Q(P_g, \delta)\widetilde{P}_{g} = \mathbf{p}_g^{T}\widetilde{P}_{g}.\qedhere
\]
\end{proof}

A natural question is what happens when $\delta\uparrow 1$. Letting $g(1)$ denote the machine corresponding to $\delta\uparrow 1$, we now show that, quite expectedly, $g(1)$ is a single-state machine.

\begin{theorem}
$g(1)$ is a single-state PFSA.
\end{theorem}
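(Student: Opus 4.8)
The plan is to identify the limiting matrix $Q(P_g,1):=\lim_{\delta\uparrow 1}Q(P_g,\delta)$ explicitly and then read off the structure of $g(1)$ directly from its $\Gamma$-expression $\Gamma_{g,x,1}=Q(P_g,1)\Gamma_{g,x}$ given by Theorem~\ref{thm:generalizedPFSA}.

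First I would establish the key limit
\[
    \lim_{\delta\uparrow 1}Q(P_g,\delta)=\lim_{\delta\uparrow 1}(1-\delta)(I-\delta P_g)^{-1}=\mathbf{1}_{|\S|}\mathbf{p}_g^{T}=:\Pi,
\]
the rank-one matrix every row of which equals the stationary distribution $\mathbf{p}_g^{T}$. Since $Q(P_g,\delta)=(1-\delta)(I-\delta P_g)^{-1}$ is a function of $P_g$, its eigenvalues are $(1-\delta)(1-\delta\lambda)^{-1}$ as $\lambda$ ranges over the spectrum of $P_g$, sharing the eigenvectors of $P_g$. Strong connectivity of $g$ makes the Perron eigenvalue $\lambda=1$ simple, with right eigenvector $\mathbf{1}_{|\S|}$ (as $P_g\mathbf{1}_{|\S|}=\mathbf{1}_{|\S|}$) and left eigenvector $\mathbf{p}_g^{T}$; it contributes the factor $(1-\delta)(1-\delta)^{-1}=1$, while every other eigenvalue $\lambda\neq 1$ (including those on the unit circle in the periodic case) satisfies $1-\delta\lambda\to 1-\lambda\neq 0$, so its factor $(1-\delta)(1-\delta\lambda)^{-1}\to 0$. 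Hence $Q(P_g,\delta)$ converges to the spectral projector onto the Perron eigenspace, which is exactly $\mathbf{1}_{|\S|}\mathbf{p}_g^{T}=\Pi$ once normalized by $\mathbf{p}_g^{T}\mathbf{1}_{|\S|}=1$.

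Next I would substitute $\Pi$ into the two transition matrices of $g(1)$. For the state-to-state matrix, using \eqref{Eq:State-StateTra} together with $\mathbf{p}_g^{T}P_g=\mathbf{p}_g^{T}$,
\[
    P_{g(1)}=Q(P_g,1)P_g=\mathbf{1}_{|\S|}\mathbf{p}_g^{T}P_g=\mathbf{1}_{|\S|}\mathbf{p}_g^{T}=\Pi.
\]
For the state-to-symbol matrix, $\Gamma_{g,x,1}=\mathbf{1}_{|\S|}\mathbf{p}_g^{T}\Gamma_{g,x}$ gives $\Gamma_{g,x,1}\mathbf{1}_{|\S|}=\mathbf{1}_{|\S|}\,(\mathbf{p}_g^{T}\Gamma_{g,x}\mathbf{1}_{|\S|})=\mathbf{1}_{|\S|}\,(\mathbf{p}_g^{T}\widetilde{P}_g)_x=\mathbf{1}_{|\S|}\,p_g(x)$, so every row of $\widetilde{P}_{g(1)}$ equals the stationary symbol distribution $(p_g(x))_{x\in\X}$. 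Thus both $P_{g(1)}$ and $\widetilde{P}_{g(1)}$ are rank-one with identical rows: neither the next state nor the emitted symbol depends on the current state, the realization of $g(1)$ is an i.i.d. sequence with law $(p_g(x))_{x\in\X}$, and all states are equivalent and may be merged into one. This is the precise sense in which $g(1)$ is a single-state PFSA.

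I expect the only delicate point to be the rigorous justification of the limit $Q(P_g,\delta)\to\Pi$ when $P_g$ is non-diagonalizable or periodic. In a Jordan block attached to an eigenvalue $\lambda\neq 1$, the entries of $(I-\delta P_g)^{-1}$ remain bounded as $\delta\uparrow 1$ (being derivatives of $(1-\delta\mu)^{-1}$ at $\mu=\lambda$, where $1-\delta\lambda\to 1-\lambda\neq 0$), so the prefactor $(1-\delta)\to 0$ still annihilates the block; and $\lambda=1$ is simple, so it contributes no Jordan complication. Should one prefer a route that sidesteps the Jordan form entirely, the Abelian theorem offers a clean alternative: since $Q(P_g,\delta)=(1-\delta)\sum_{i=0}^{\infty}\delta^iP_g^i$ by \eqref{eq:Q}, its Abel limit coincides with the Ces\`aro limit $\lim_{N\to\infty}\frac1N\sum_{i=0}^{N-1}P_g^i=\mathbf{1}_{|\S|}\mathbf{p}_g^{T}$, which holds for every finite irreducible stochastic matrix irrespective of aperiodicity or diagonalizability.
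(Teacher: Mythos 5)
Your proof is correct and follows essentially the same route as the paper: both hinge on the limit $\lim_{\delta\uparrow 1}Q(P_g,\delta)=\mathbf{1}_{|\S|}\mathbf{p}_g^{T}$ and then show that the emitted-symbol distribution is independent of the history, hence the process is i.i.d.\ and the machine collapses to a single state. Your spectral/Abelian justification of that limit (covering Jordan blocks and periodicity) is more careful than the paper's, which merely cites the eigenvalue observations from Theorem~\ref{Thm:DEltaMachine}, and your reading of the i.i.d.\ property off the identical rows of $\widetilde{P}_{g(1)}$ is an equivalent substitute for the paper's explicit telescoping product $p_{g(1)}(\mathbf{x}x)=\paren{\mathbf{p}^{T}\mathbf{1}}\paren{\mathbf{p}^{T}\Gamma_{x_1}\mathbf{1}}\cdots\paren{\mathbf{p}^{T}\Gamma_{x_n}\mathbf{1}}\paren{\mathbf{p}^{T}\Gamma_{x}\mathbf{1}}$.
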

\begin{proof}
First note that the observations given in the proof of Theorem~\ref{Thm:DEltaMachine} imply that 
\[
    \lim_{\delta\rightarrow1}Q(P_g, \delta) = \mathbf{1}_{|\S|}\mathbf{p}_g^{T},
\]
and consequently $g(1)$  is a PFSA specified by $\mathbf{1}_{|\S|}\mathbf{p}_g^{T}\Gamma_{g, x}$ for $x\in\X$. 

Suppose $\mathbf{x}=x_1x_2\dots x_n$ is observed. Following the argument given in section \ref{subsec:PFSA_with_Deletion}, we get
\begin{eqnarray*}
    &&p_{g(1)}(\mathbf{x}x)\\
    &= &\mathbf{p}^{T}(\mathbf{1}\mathbf{p}^{T}\Gamma_{x_1})(\mathbf{1}\mathbf{p}^{T}\Gamma_{x_2})\cdots(\mathbf{1}\mathbf{p}^{T}\Gamma_{x_n})\paren{\widetilde{P}_{g(1)}}_{\cdot, x}\\
    &= &\mathbf{p}^{T}(\mathbf{1}\mathbf{p}^{T}\Gamma_{x_1})(\mathbf{1}\mathbf{p}^{T}\Gamma_{x_2})\cdots(\mathbf{1}\mathbf{p}^{T}\Gamma_{x_n})\paren{\mathbf{1}\mathbf{p}^{T}\Gamma_x\mathbf{1}}\\
    &=&\paren{\mathbf{p}^{T}\mathbf{1}}\paren{\mathbf{p}^{T}\Gamma_{x_1}\mathbf{1}}\cdots\paren{\mathbf{p}^{T}\Gamma_{x_n}\mathbf{1}}\paren{\mathbf{p}^{T}\Gamma_x\mathbf{1}},
\end{eqnarray*}
and hence, by induction, $p_{g(1)}(x|\mathbf{x}) = \mathbf{p}^{T}\Gamma_x\mathbf{1}$ for all $\mathbf{x}$.
Since an i.i.d.\ process corresponds to a single-state PFSA, we conclude that $g(1)$ is in fact a single-state PFSA. 
\end{proof}
\begin{figure}[t]
    \centering
    \subfigure[$\delta = 0$]{\includegraphics[scale=.85]{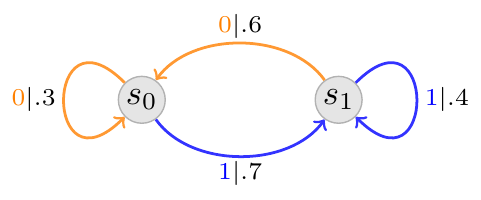}}
    \subfigure[$\delta = .25$]{\includegraphics[scale=.85]{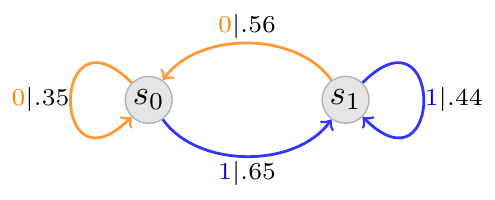}}
    \caption{On the left: $g_{(.3, .6)}$ in class M2. On the right, $g_{(.3, .6)}(.25)$, with transition probabilities rounded to two decimal places. We can see that deletion only cause the transition probabilities to change, but keep the structure of the machine.}
    \label{fig:PFSA_M2}
\end{figure}
\subsection{M2 Class of PFSA}
We note that $g(\delta)$ of a PFSA $g$ is not necessarily a PFSA. As an example, the $\Gamma$-expression of the generalized PFSA $g(.4)$ for $g$ being the PFSA described in Fig.~\ref{fig:PFSA_M4} is
\begin{figure}[h!]
    \centering
    \includegraphics[scale=0.9]{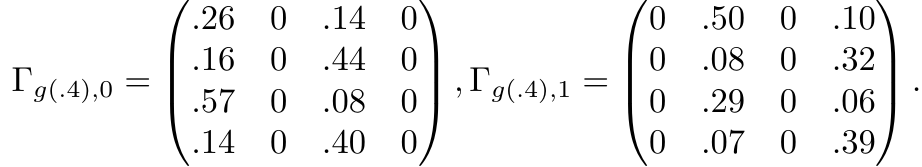}
\end{figure}

Nevertheless, we introduce M2 a class of PFSAs which is closed under deletion, i.e.\ $g\in \text{M2}$ implies  $g(\delta)\in \text{M2}$ for all $\delta\in[0, 1]$. As this class is instrumental in our experimental results, we shall study it in more details. 

M2 is the collection of $2$-state PFSAs on a binary alphabet: $g = g_{(\mu, \nu)}\in \text{M2}$ with $\mu, \nu \in (0, 1) \times (0, 1)$ is specified by a quadruple $\paren{\mathcal{S}, \mathcal{X}, \mathsf{T}, \mathsf{P}_{(\mu, \nu)}}$, where $\S = \set{s_0, s_1} \quad \X = \set{0, 1},$ and 

\[
\Gamma_{g_{(\mu, \nu)}, 0} = \begin{pmatrix}\mu & 0 \\ \nu & 0\end{pmatrix}, \quad \Gamma_{g_{(\mu, \nu)}, 1} = \begin{pmatrix}0 & 1-\mu \\0 & 1 - \nu\end{pmatrix}.
\]

Fig.~\ref{fig:PFSA_M2} illustrates $g_{(.3, .6)}$ and its corresponding $g_{(.3, .6)}(\delta)$, which is obtained from Theorem~\ref{thm:generalizedPFSA}. Since $\Gamma_{g, x, \delta}$ has exactly the same form -- containing a single column of non-zero entries for all $\delta$, it is clear that $g_{(.3, .6)}(\delta)\in \text{M2}$. 

Since each $g_{(\nu, \mu)}$ is specified by two numbers, we can parametrize M2 by a square in $\R^2$. In Fig.~\ref{fig:parametrization_M2}, we show the effect of deletion probability on M2 machines. The key observation is that deletion probability drives machines to $\mu=\nu$ line.  
\begin{figure}[ht]
    \centering
    \subfigure[$\delta = 0$]{\includegraphics[scale=.25]{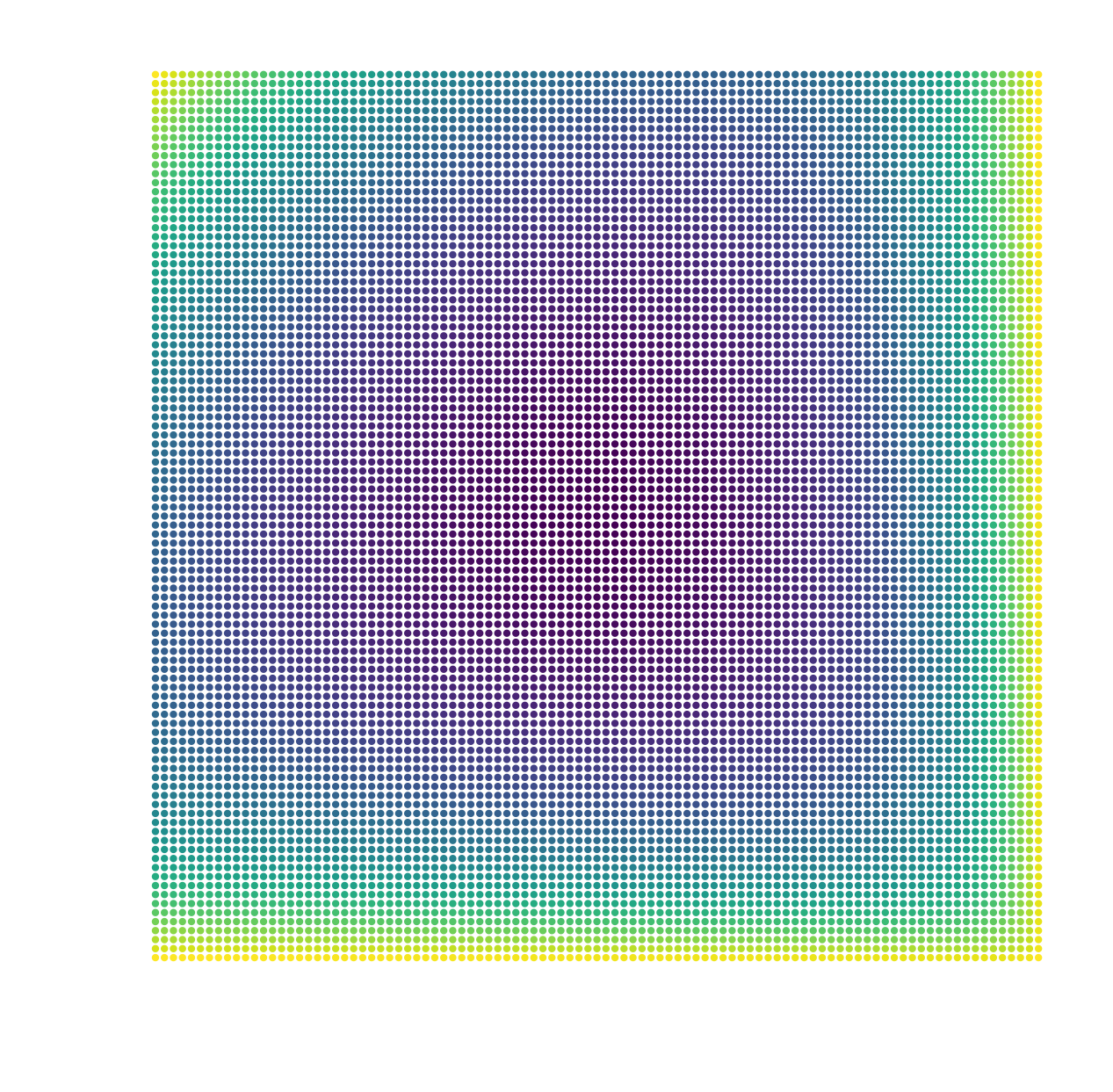}}
    \subfigure[$\delta = 0.25$]{\includegraphics[scale=.25]{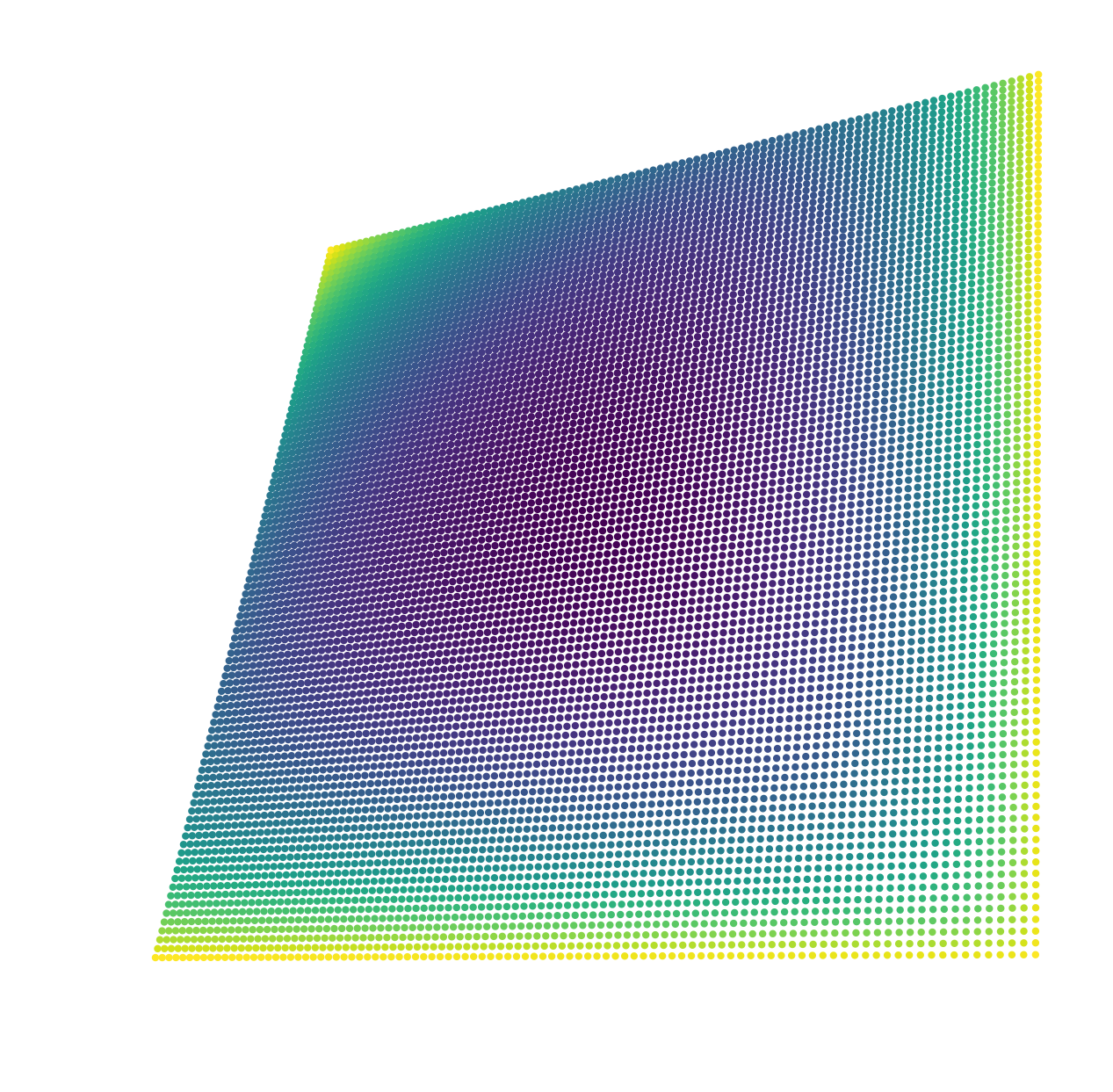}}
    \subfigure[$\delta = 0.50$]{\includegraphics[scale=.25]{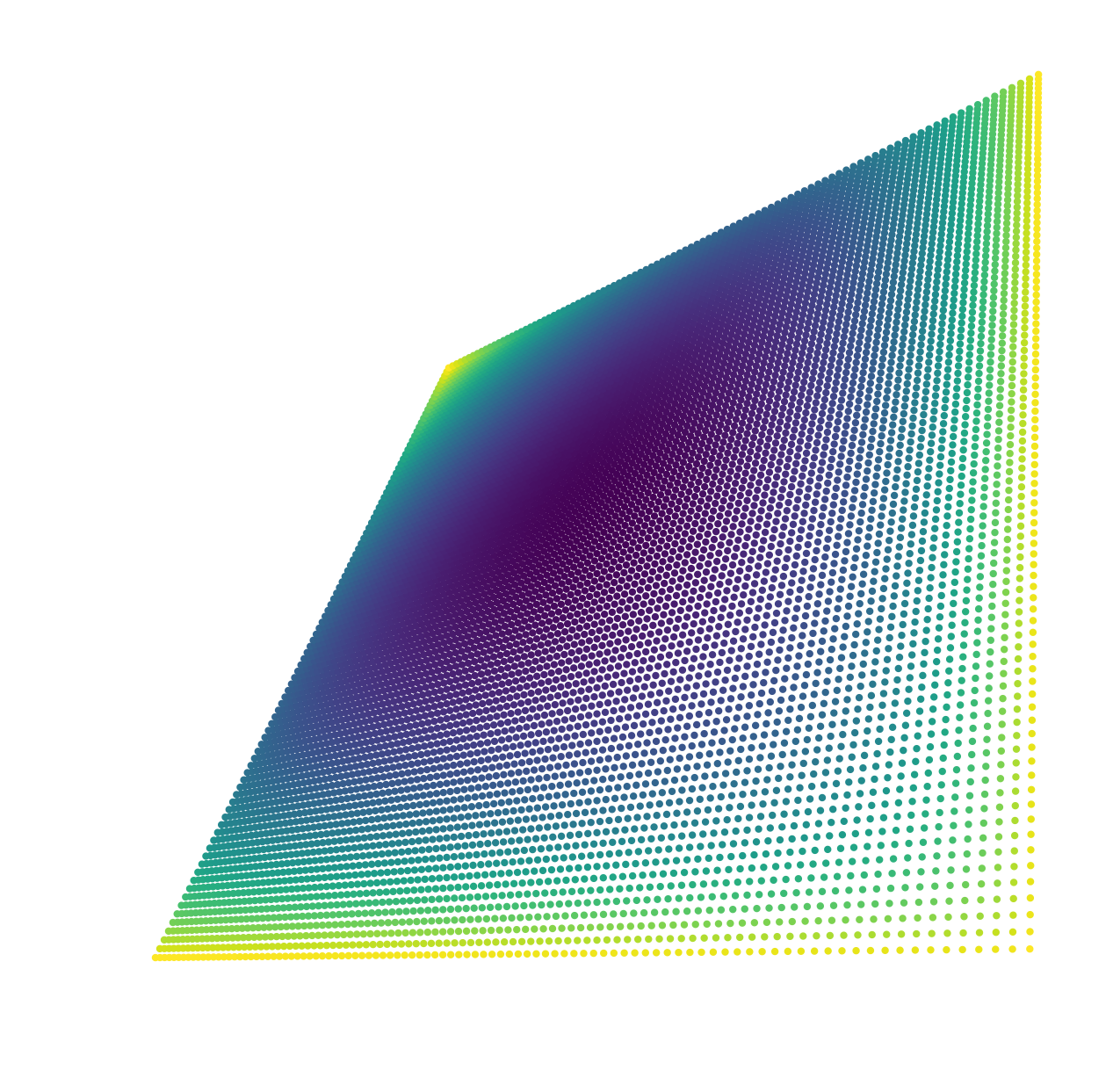}}
    \subfigure[$\delta = 0.75$]{\includegraphics[scale=.25]{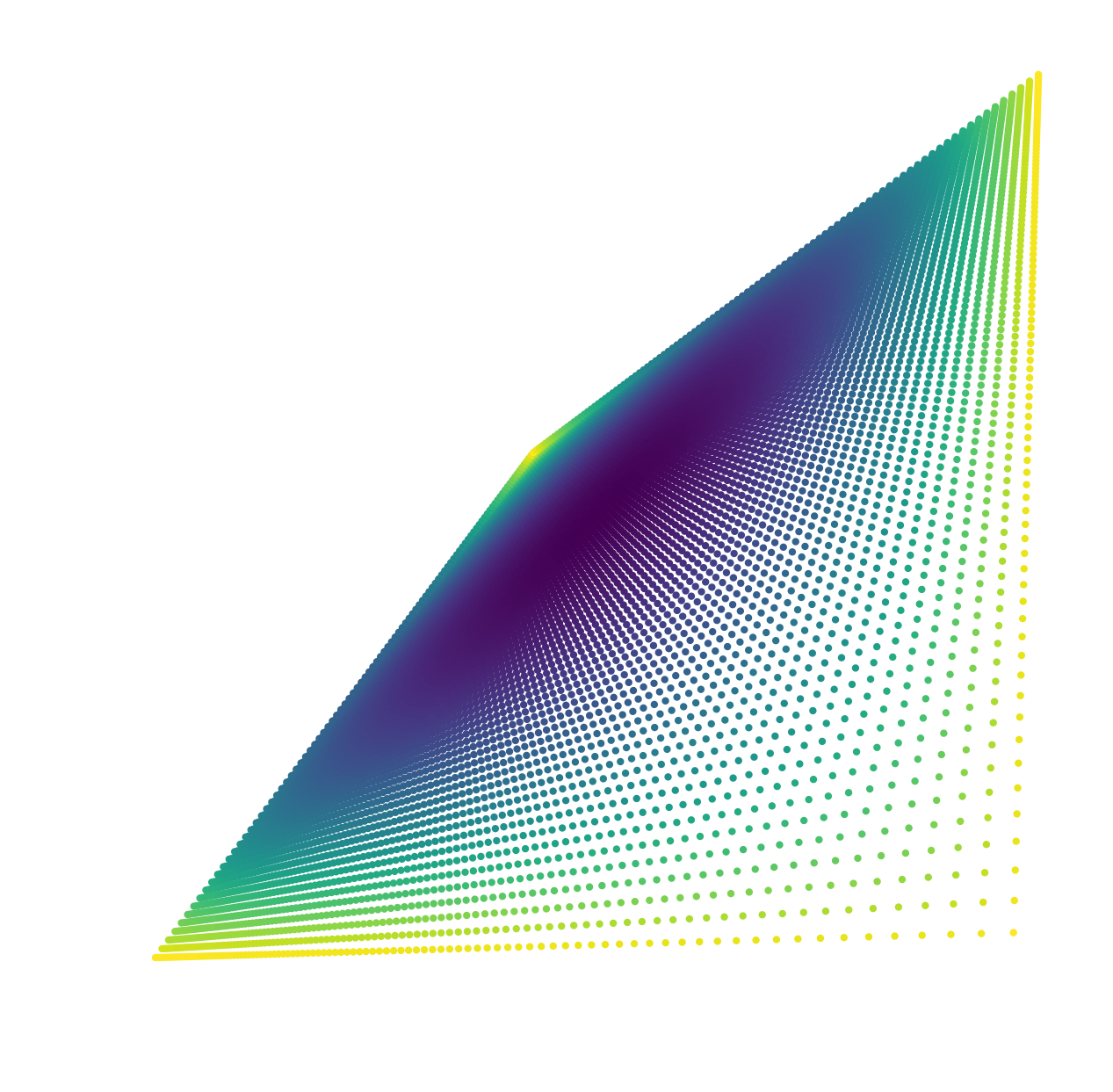}}
    \caption{Each dot in (a) represents a $g_{(\mu, \nu)}$ in M2 with $\mu, \nu$ both ranging from $0.01$ to $0.99$ and with $0.01$ increment. The color of the points is proportional to the KL divergence (defined in Section~\ref{subsec:KLDivergence}) of $g_{(.5, .5)}$ to $g$. The reason that the images are symmetric with respect to the $\mu + \nu = 1$ line is that $g_{(1-\nu, 1-\mu)}$ is exactly $g_{(\mu,\nu)}$ with the two states swapped. We can see that while we increase $\delta$, the dots are moving towards the $\mu = \nu$ line which corresponds to the single-state PFSA. The asymmetry in how fast PFSA on each side of the $\mu+\nu=1$ line converges to single-state PFSA is caused by structural difference between them -- machines on the upper side, with $\mu < \nu$,  have strong connections between two states, while machines on the lower side, with $\mu > \nu$, have weaker connection between the states.}
    \label{fig:parametrization_M2}
\end{figure}

\section{The convergence of likelihood}\label{Sec:MLA}
The goal of this section is to lay the theoretical ground for our algorithms for decoding and tamper detecting with PFSAs. In Section~\ref{subsec:ThmLlhConvergence}, we employ maximum likelihood framework to decode the generating PFSA given the channel output. We show that likelihood is closely related to entropy rate and KL divergence of PFSAs (to be defined and calculated in \ref{subsec:EntropyRate} and \ref{subsec:KLDivergence}). 

\subsection{Entropy rate of PFSA}\label{subsec:EntropyRate}
Let $g$ be a PFSA. We define $H_n(g)$ as the following: 
\[
    H_n(g) \coloneqq -\sum_{|\mathbf{x}| = n}p_g(\mathbf{x})\log{p_g(\mathbf{x})}.
\]
Then the entropy rate of $g$ is defined as 
$$H(g)\coloneqq\lim_{n\rightarrow\infty}\frac{1}{n}H_n(g).$$
Note that $H(g)$ is in fact the entropy rate of the stochastic process corresponding to $g$ \cite{Cover:EIT}.
In the next theorem, we show that the above limit exists and and the entropy rate has a simple closed form.
\begin{theorem}\label{thm:EntropyRate}
We have
\[
    H(g) = \sum_{s\in\S} (\mathbf{p}_g)_sH\paren{\paren{\widetilde{P}_g}_{s, \cdot}}
\]
\end{theorem}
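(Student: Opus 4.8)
My plan is to identify $H_n(g)$ with the joint Shannon entropy $H(X_1,\dots,X_n)$ of the first $n$ symbols emitted by $g$, where the initial state $S_1$ is drawn from the stationary distribution $\mathbf{p}_g$ (here I take $g$ strongly connected so that $\mathbf{p}_g$ is well-defined, as in Section~\ref{sec:PFSA}). The underlying state sequence $(S_i)$ is then a \emph{stationary} Markov chain with transition matrix $P_g$, and the emission is governed by $\widetilde{P}_g$: conditioned on $S_i=s$, the symbol $X_i$ has distribution $(\widetilde{P}_g)_{s,\cdot}=(\mathsf{P}_g(s,x))_{x\in\X}$. The one structural fact I will lean on throughout is that, for an ordinary PFSA, transitions are \emph{deterministic} given the emitted symbol, i.e.\ $S_{i+1}=\mathsf{T}(S_i,X_i)$. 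The strategy is to compute $\lim_{n\to\infty}\tfrac1n H(X_1,\dots,X_n)$ by first conditioning on $S_1$, where the determinism makes the per-step conditional entropies collapse, and then to argue that removing the conditioning on $S_1$ costs only a bounded, hence asymptotically negligible, amount.

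For the main computation I would apply the chain rule conditioned on the initial state,
\[
H(X^n\mid S_1)=\sum_{i=1}^n H\!\left(X_i \mid X^{i-1}, S_1\right).
\]
Because $S_i=\mathsf{T}(\cdots \mathsf{T}(S_1,X_1)\cdots, X_{i-1})$ is a deterministic function of $(S_1,X^{i-1})$, and because $X_i\indep (X^{i-1},S_1)\mid S_i$ (the emission depends only on the current state), each summand reduces to the expected emission entropy $\sum_{s\in\S}\Pr(S_i=s)\,H\!\left((\widetilde{P}_g)_{s,\cdot}\right)$. Stationarity gives $\Pr(S_i=s)=(\mathbf{p}_g)_s$ for every $i$, so each term equals the same constant
\[
R \;=\; \sum_{s\in\S}(\mathbf{p}_g)_s\,H\!\left((\widetilde{P}_g)_{s,\cdot}\right),
\]
whence $H(X^n\mid S_1)=nR$.

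To pass from $H(X^n\mid S_1)$ back to $H(X^n)=H_n(g)$, I would use the two ways of expanding $H(S_1,X^n)$, which give
\[
H(X^n)=H(S_1)+H(X^n\mid S_1)-H(S_1\mid X^n)=H(S_1)+nR-H(S_1\mid X^n).
\]
Since $S_1$ takes values in the finite set $\S$, both $H(S_1)$ and $H(S_1\mid X^n)$ lie in $[0,\log|\S|]$. Dividing by $n$ and letting $n\to\infty$, these boundary terms vanish and $H(g)=\lim_n \tfrac1n H_n(g)=R$, which is the claimed formula; this simultaneously shows the limit defining $H(g)$ exists.

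The step I expect to be the crux is the identity $H(X_i\mid X^{i-1},S_1)=\sum_{s}(\mathbf{p}_g)_s H\!\left((\widetilde{P}_g)_{s,\cdot}\right)$: it is exactly where the deterministic-transition structure of an ordinary PFSA is essential, since it is what lets the past symbols together with $S_1$ pin down $S_i$, and where stationarity of $\mathbf{p}_g$ enters to make the marginal of $S_i$ constant in $i$. Everything else is bookkeeping with the chain rule and a finite-entropy bound. I would flag that this argument does \emph{not} transfer verbatim to a \emph{generalized} PFSA, where $S_{i+1}$ is no longer a deterministic function of $(S_i,X_i)$, so a separate treatment would be needed there.
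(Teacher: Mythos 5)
Your proof is correct, and it rests on the same structural facts as the paper's: conditioning on the initial state $S$ makes the state at time $i$ a deterministic function of $(S, X^{i-1})$, so $H(X_i\mid X^{i-1},S)=\sum_{s}\Pr(S_i=s)\,H\big((\widetilde{P}_g)_{s,\cdot}\big)$, and stationarity of $\mathbf{p}_g$ under $P_g$ makes each marginal $\Pr(S_i=\cdot)$ equal to $\mathbf{p}_g$. Where you differ is the bookkeeping around that core identity. The paper first invokes the stationary-process fact $\lim_n \tfrac1n H(X^n)=\lim_n H(X_n\mid X^{n-1})$, then decomposes $H(X_n\mid X^{n-1})=A_n+B_n$ with $A_n=H(X_n\mid S,X^{n-1})$ and $B_n=H(S\mid X^{n-1})-H(S\mid X^n)$, and shows $B_n\to 0$ because the $B_n$ are nonnegative with telescoping partial sums bounded by $H(S)$. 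You instead sum the chain rule once to get $H(X^n\mid S)=nR$ exactly and absorb the discrepancy $H(X^n)-H(X^n\mid S)=H(S)-H(S\mid X^n)\in[0,\log|\S|]$ into a single bounded term that vanishes after dividing by $n$. Your route is slightly more economical: it needs neither the Cesàro-type lemma from Cover--Thomas nor the $B_n\to 0$ argument, and it yields the existence of the limit defining $H(g)$ directly rather than via the equivalent conditional-entropy limit. Both arguments share the limitation you correctly flag: they rely on transitions being deterministic given the emitted symbol, so neither extends verbatim to generalized PFSAs.
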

\begin{proof}
See Appendix~\ref{Appendix_Entropy}.
\end{proof}
It readily follows from the theorem above that the entropy rate for $g_{(\mu, \nu)}$ is 
\begin{equation*}
    H\paren{g_{(\mu, \nu)}} = \frac{\nu h_b(\mu)}{\bar{\mu} + \nu} +  \frac{\bar{\mu}h_b(\nu)}{\bar{\mu} + \nu},
\end{equation*}
where $\bar{a} \coloneqq 1-a$ and $h_b(a)\coloneqq -a\log{a} - \bar{a}\log\bar{a}$ is the binary entropy function for any $a\in [0,1]$.

Next, we show that deletion increases entropy rate, which will be critical for tamper detection purpose.
\begin{theorem}\label{thm:MonotonicityDeltaM2}
The map $\delta\mapsto H(g_{(\mu, \nu)}(\delta))$ is monotonically increasing when $\mu\neq \nu$. 
\end{theorem}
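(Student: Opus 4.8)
The plan is to make everything explicit by first identifying the deleted machine $g_{(\mu,\nu)}(\delta)$ as an honest element of the M2 class, and then differentiating the closed-form entropy rate. The state-to-state matrix of $g_{(\mu,\nu)}$ is $P=\left(\begin{smallmatrix}\mu & \bar\mu\\ \nu & \bar\nu\end{smallmatrix}\right)$, whose eigenvalues are $1$ and $r:=\mu-\nu$. A direct $2\times2$ computation of $Q(P,\delta)=(1-\delta)(I-\delta P)^{-1}$ (using $\det(I-\delta P)=(1-\delta)(1-\delta r)$) gives
\[
Q(P,\delta)=\frac{1}{1-\delta r}\begin{pmatrix}1-\delta\bar\nu & \delta\bar\mu\\ \delta\nu & 1-\delta\mu\end{pmatrix}.
\]
Multiplying by $\Gamma_{g,x}$ as in Theorem~\ref{thm:generalizedPFSA} and reading off the resulting M2 parameters, I expect to find $g_{(\mu,\nu)}(\delta)=g_{(\mu(\delta),\nu(\delta))}$ with $\mu(\delta)=\frac{\mu-\delta r}{1-\delta r}$ and $\nu(\delta)=\frac{\nu}{1-\delta r}$. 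Since M2 is closed under deletion, $g_{(\mu,\nu)}(\delta)$ is a genuine PFSA and Theorem~\ref{thm:EntropyRate} applies verbatim. As a sanity check, $\mu(\delta)-\nu(\delta)=\frac{r(1-\delta)}{1-\delta r}\to0$ as $\delta\uparrow1$, matching the observed convergence to the $\mu=\nu$ line.

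The conceptual engine is Theorem~\ref{Thm:DEltaMachine}: the stationary distribution is preserved under deletion, so the weights in the entropy-rate formula do not depend on $\delta$. Concretely, I would write $H(g_{(\mu,\nu)}(\delta))=p_0\,h_b(\mu(\delta))+p_1\,h_b(\nu(\delta))$ with the \emph{fixed} weights $p_0=\frac{\nu}{\bar\mu+\nu}$ and $p_1=\frac{\bar\mu}{\bar\mu+\nu}$ (one can verify $\frac{\nu(\delta)}{\nu(\delta)+\bar{\mu(\delta)}}=p_0$ independently of $\delta$). This decoupling is what makes the derivative tractable, since only the two arguments $\mu(\delta),\nu(\delta)$ carry the $\delta$-dependence.

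Next I would differentiate. Using $h_b'(a)=\log\frac{\bar a}{a}$ together with $\bar{\mu(\delta)}=\frac{\bar\mu}{1-\delta r}$, $\bar{\nu(\delta)}=\frac{\bar\nu-\delta r}{1-\delta r}$, $\mu'(\delta)=\frac{-r\bar\mu}{(1-\delta r)^2}$, and $\nu'(\delta)=\frac{r\nu}{(1-\delta r)^2}$, the two terms combine into a single logarithm:
\[
\frac{d}{d\delta}H\big(g_{(\mu,\nu)}(\delta)\big)=\frac{r\,\nu\,\bar\mu}{(\bar\mu+\nu)(1-\delta r)^2}\,\log\frac{(\mu-\delta r)(\bar\nu-\delta r)}{\bar\mu\,\nu}.
\]
The prefactor $\frac{\nu\bar\mu}{(\bar\mu+\nu)(1-\delta r)^2}$ is strictly positive (note $1-\delta r>0$ since $|r|<1$ and $\delta<1$), so the sign is that of $r\,\log R(\delta)$ with $R(\delta)=\frac{(\mu-\delta r)(\bar\nu-\delta r)}{\bar\mu\nu}$. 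It therefore suffices to show $r\big(R(\delta)-1\big)>0$, i.e.\ $r\,\phi(\delta)>0$ where $\phi(\delta)=(\mu-\delta r)(\bar\nu-\delta r)-\bar\mu\nu$. Expanding with $\mu+\bar\nu=1+r$ and $\mu\bar\nu=\bar\mu\nu+r$, I expect the clean collapse $r\,\phi(\delta)=r^2(1-\delta)(1-\delta r)>0$, which closes the argument for both signs of $r$ and hence for all $\mu\neq\nu$.

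The main obstacle is purely the algebra: carrying out the $Q$-computation correctly, confirming the preservation of the stationary distribution so the weights are constant, and above all recognizing the two collapses — the derivative into one logarithm and $r\,\phi(\delta)$ into $r^2(1-\delta)(1-\delta r)$. If those factorizations go through as anticipated, positivity is immediate; the only care needed is to check that $\mu(\delta),\nu(\delta)\in(0,1)$ throughout $\delta\in[0,1)$ so that $h_b$ and its derivative are well defined (which follows since $\mu-\delta r$ and $\bar\nu-\delta r$ stay positive on this range).
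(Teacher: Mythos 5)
Your proposal is correct and follows essentially the same route as the paper: identify $g_{(\mu,\nu)}(\delta)$ as the M2 machine with parameters $\mu(\delta)=\frac{\mu-\delta r}{1-\delta r}$, $\nu(\delta)=\frac{\nu}{1-\delta r}$, use the $\delta$-invariant stationary weights in the entropy-rate formula, and differentiate to obtain the single-logarithm expression $\frac{r\nu\bar\mu}{(\bar\mu+\nu)(1-\delta r)^2}\log\frac{(\mu-\delta r)(\bar\nu-\delta r)}{\bar\mu\nu}$, which matches the paper's derivative exactly (with $\alpha=r$ and $\bar\alpha=\bar\mu+\nu$). You additionally carry out the positivity check that the paper dismisses as ``straightforward,'' via the clean identity $r\bigl[(\mu-\delta r)(\bar\nu-\delta r)-\bar\mu\nu\bigr]=r^2(1-\delta)(1-\delta r)>0$, which is a welcome completion of the argument.
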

\begin{proof}
We have
\[
    \mu(\delta) = \frac{\mu - \delta(\mu - \nu)}{1-\delta(\mu - \nu)}, \quad \nu(\delta) = \frac{\nu}{1-\delta(\mu - \nu)},
\]
and 
\begin{equation*}
    H\paren{g_{(\mu, \nu)}(\delta)} = \frac{\nu}{1- \mu + \nu}h_b\paren{\frac{\mu - \delta(\mu - \nu)}{1-\delta(\mu - \nu)}} + \frac{1-\mu}{1- \mu + \nu}h_b\paren{\frac{\nu}{1-\delta(\mu - \nu)}}.
\end{equation*}
We can then write
$$\frac{\text{d}}{\text{d}\delta}H(g_{(\mu, \nu)}(\delta))=\frac{\alpha\bar{\mu}\nu}{(1-\alpha\delta)^2\bar{\alpha}}\log{\frac{(\mu-\delta\alpha)(\bar{\nu}-\delta\alpha)}{\bar{\mu}\nu}},$$
where $\alpha=\mu - \nu$. It's straightforward to check that the derivative is always positive when $\mu \neq \nu$. 
\end{proof}

\subsection{KL divergence of two PFSAs}\label{subsec:KLDivergence}
Let $g_1, g_2\in \text{M2}$. The $n$-th order KL divergence between $g_1$ and $g_2$ is the KL divergence on the space of length-$n$ sequences, i.e. 
\[
    D_n\parenBar{g_1}{g_2} = \sum_{|\mathbf{x}| = n}p_{g_1}(\mathbf{x})\log{\frac{p_{g_1}(\mathbf{x})}{p_{g_2}(\mathbf{x})}}.
\]
Analogous to entropy rate, we can define the KL divergence between $g_1$ and $g_1$ as 
\[
    D_{\textrm{KL}}\parenBar{g_1}{g_2}\coloneqq \lim_{n\rightarrow\infty}\frac{1}{n}D_n\parenBar{g_1}{g_2}.
\]
We show in Theorem \ref{thm:KLDivergence} below shows that the limit exists and also derived a closed form for the KL divergence between two  PFSAs. But before we can state the theorem, we need to introduce a very useful construction on two PFSAs, called \textit{synchronous composition}. 
\begin{definition}[synchronous composition]
Let $g_1=\paren{\S, \X, \mathsf{T}_1, \mathsf{P}_1}$ and $g_2=\paren{\T, \X, \mathsf{T}_2, \mathsf{P}_2}$ be two PFSAs with the same alphabet and let $g^{*}_{\textrm{c}}\parenBar{g_1}{g_2}$ be the probabilistic automata specified by the quadruple $\paren{\S_{\textrm{c}}, \X, \mathsf{T}_{\textrm{c}}, \mathsf{P}_{\textrm{c}}}$ where
\begin{align*}
    \S_{\textrm{c}} &= \S_1 \times \T = \set{(s, t)}_{s\in\S_1, t\in\T}
\end{align*}
is the Cartesian product of $\S$ and $\T$, and
\begin{align*}
    \mathsf{T}_{\textrm{c}}((s, t), x) &= \paren{\mathsf{T_1}\paren{s, x}, \mathsf{T_2}\paren{t, x}},\\
    \mathsf{P}_{\textrm{c}}((s, t), x) &= \mathsf{P_1}\paren{s, x},
\end{align*}
for all $s\in\S$, $t\in\S$, and $x\in\X$. Then the synchronous composition $g_{\textrm{c}}\parenBar{g_1}{g_2}$ is defined to be any absorbing strongly connected component of $g^{*}_{\textrm{c}}\parenBar{g_1}{g_2}$, i.e. strongly connected component without any out-going edges.
\end{definition}
It is not clear that there is only one absorbing strongly connected  component in $g^{*}_{\textrm{c}}\parenBar{g_1}{g_2}$. However, as proved in Theorem~\ref{Thm:Equivalence} in Appendix~\ref{Appendix:KL}, $g_{\textrm{c}}\parenBar{g_1}{g_2}$ is \textit{equivalent} to $g_1$ irrespective of the choice of absorbing strongly connected  component, i.e.,  $p_{g_{\textrm{c}}}\paren{\mathbf{x}} = p_{g_1}\paren{\mathbf{x}}$ for $\mathbf{x}\in\X^{*}$.

In Figs.~\ref{fig:SyncCompM2&M2}, \ref{fig:SyncCompM2&S}, \ref{fig:SyncCompM2&T}, and \ref{fig:SyncCompM4&M2}, we provide examples of synchronous compositions for several $g_1$ and $g_2$ which shed light on the fact that the synchronous composition of two strongly connected PFSA might not be strongly connected.
\begin{theorem}\label{thm:KLDivergence}
Let $g_{\textrm{c}}=g_{\textrm{c}}\parenBar{g_1}{g_2}$ and $\mathbf{p}_{g_{\textrm{c}}}$ be the stationary distribution of $g_{\textrm{c}}$. Then we have
\begin{align*}
    &\lim_{n\rightarrow\infty}\frac{1}{n}D_n\parenBar{p_{g_1}^{n}}{p_{g_1}^{n}} = \sum_{s\in\S, t\in\T}{\paren{\mathbf{p}_{g_{\textrm{c}}}}}_{(s,t)}D_{\textrm{KL}}\parenBar{\paren{\widetilde{P}_{g_1}}_{s,\cdot}}{\paren{\widetilde{P}_{g_2}}_{t,\cdot}}.
\end{align*}
\end{theorem}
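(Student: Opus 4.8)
The plan is to imitate the proof of the entropy-rate formula (Theorem~\ref{thm:EntropyRate}), but to run it on the synchronous composition $g_{\textrm{c}}$, which is the object that tracks the state of $g_1$ and the state of $g_2$ \emph{jointly} as a single string $\mathbf{x}$ is read. The entry point is the chain rule. Since the factorization in \eqref{Def_Likelihood} gives $p_{g_j}(\mathbf{x}^n)=\prod_{i=1}^{n}p_{g_j}\paren{x_i\mid\mathbf{x}^{i-1}}$ for $j\in\set{1,2}$, taking $\E_{\mathbf{x}\sim g_1}$ of $\log\paren{p_{g_1}(\mathbf{x}^n)/p_{g_2}(\mathbf{x}^n)}$ and using linearity yields
\[
    D_n\parenBar{g_1}{g_2}=\sum_{i=1}^{n}\E_{\mathbf{x}\sim g_1}\!\left[\log\frac{p_{g_1}\paren{x_i\mid\mathbf{x}^{i-1}}}{p_{g_2}\paren{x_i\mid\mathbf{x}^{i-1}}}\right].
\]
It therefore suffices to identify the limit of the $i$-th summand as $i\to\infty$ and then invoke the Ces\`aro theorem, so that $\frac1n D_n$ converges to that same limit.

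To identify the per-step limit I would argue as follows. When $\mathbf{x}$ is generated by $g_1$, the actual generating state $S_i$ of $g_1$ after the prefix $\mathbf{x}^{i-1}$, together with the state $T_i$ into which the same prefix drives $g_2$, forms a pair $(S_i,T_i)$ whose evolution is governed exactly by the Markov chain underlying $g_{\textrm{c}}$: emissions follow $\mathsf{P}_1$ and the joint deterministic transition is $(\mathsf{T}_1,\mathsf{T}_2)$. Conditioned on $(S_i,T_i)=(s,t)$, the symbol is drawn from $\paren{\widetilde{P}_{g_1}}_{s,\cdot}$ while $g_2$ scores it with $\paren{\widetilde{P}_{g_2}}_{t,\cdot}$, so the conditional expectation of the $i$-th log-ratio is $\sum_{x}\mathsf{P}_1(s,x)\log\frac{\mathsf{P}_1(s,x)}{\mathsf{P}_2(t,x)}=D_{\textrm{KL}}\parenBar{\paren{\widetilde{P}_{g_1}}_{s,\cdot}}{\paren{\widetilde{P}_{g_2}}_{t,\cdot}}$. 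Because $g_{\textrm{c}}$ is strongly connected, the ergodic theorem gives $\frac1n\sum_{i=1}^{n}\mathbf{1}[(S_i,T_i)=(s,t)]\to\paren{\mathbf{p}_{g_{\textrm{c}}}}_{(s,t)}$, and combining this with the per-state computation and the chain-rule display produces precisely the claimed right-hand side $\sum_{s\in\S,t\in\T}\paren{\mathbf{p}_{g_{\textrm{c}}}}_{(s,t)}D_{\textrm{KL}}\parenBar{\paren{\widetilde{P}_{g_1}}_{s,\cdot}}{\paren{\widetilde{P}_{g_2}}_{t,\cdot}}$.

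The delicate point, which I expect to be the main obstacle, is that $p_{g_2}\paren{x_i\mid\mathbf{x}^{i-1}}$ is not literally $\paren{\widetilde{P}_{g_2}}_{T_i,\cdot}$: the likelihood recursion \eqref{Def_Likelihood} initializes $g_2$ from its stationary distribution and maintains a \emph{belief} over $g_2$'s states rather than the single deterministically reached state $T_i$. The crux is to show that replacing the belief-state likelihood $p_{g_2}(\mathbf{x}^n)$ by the point-mass-initialized product $q(\mathbf{x}^n\mid t)\coloneqq\prod_{i=1}^{n}\paren{\widetilde{P}_{g_2}}_{T_i,x_i}$ (with $T_1=t$ fixed) alters $\log p_{g_2}(\mathbf{x}^n)$ by only an $O(1)$ amount, which is annihilated by $\frac1n$. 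I would establish this from the identity $p_{g_2}(\mathbf{x}^n)=\sum_{t}\paren{\mathbf{p}_{g_2}}_t\,q(\mathbf{x}^n\mid t)$: since strong connectivity of $g_2$ makes every $\paren{\mathbf{p}_{g_2}}_t$ a positive constant, the sum is sandwiched between a constant multiple of its largest term and $|\T|$ times it, so $\log p_{g_2}(\mathbf{x}^n)=\log q(\mathbf{x}^n\mid t^\ast)+O(1)$, and the standard ergodic fact that the cross-entropy rate of an irreducible finite-state source is independent of initialization closes the gap; the same remark applied to $g_1$, together with the equivalence $p_{g_{\textrm{c}}}=p_{g_1}$ from Theorem~\ref{Thm:Equivalence}, guarantees that the joint occupancy we average against is indeed $\mathbf{p}_{g_{\textrm{c}}}$. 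Once this reduction is in place, the remaining ingredients (chain rule, stationarity, and Ces\`aro averaging) are routine.
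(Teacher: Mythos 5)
Your skeleton is the same as the paper's: both arguments reduce $\tfrac1n D_n\parenBar{g_1}{g_2}$ to a per-step conditional divergence $D_{\textrm{KL}}\parenBar{\paren{\widetilde{P}_{g_1}}_{s,\cdot}}{\paren{\widetilde{P}_{g_2}}_{t,\cdot}}$ averaged against the occupancy of the state pair $(S_i,T_i)$ of the synchronous composition, and both must then dispose of the discrepancy between the belief-state likelihood $p_{g_2}(x_i|\mathbf{x}^{i-1})$ and the point-state quantity $\paren{\widetilde{P}_{g_2}}_{T_i,x_i}$. Where you diverge is in how that discrepancy is killed. The paper introduces the initial state pair $(S,T)$ as a latent variable distributed according to $\mathbf{p}_{g_{\textrm{c}}}$ (so the pair chain is stationary inside the chosen absorbing component from time zero) and applies the chain rule of KL divergence; the correction then appears as a telescoping difference $D_n-D_{n-1}$ of state-posterior divergences $D_{\textrm{KL}}\parenBar{p_{12}(S,T|X^n)}{p_{21}(S,T|X^n)}$, which is shown to vanish because the posteriors converge. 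You instead try to show directly that $\log p_{g_2}(\mathbf{x}^n)=\log q(\mathbf{x}^n\mid t_0)+o(n)$ for the particular initialization $t_0$ you track.

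That last step is where your argument has a real gap. The sandwich $p_{g_2}(\mathbf{x}^n)=\sum_t(\mathbf{p}_{g_2})_t\,q(\mathbf{x}^n\mid t)$ only gives $\log p_{g_2}(\mathbf{x}^n)=\max_t\log q(\mathbf{x}^n\mid t)+O(1)$, and passing from the maximizer to your fixed $t_0$ requires that the cross-entropy rate $-\tfrac1n\log q(\mathbf{x}^n\mid t)$ of a $g_1$-generated sequence scored by $g_2$ be independent of $t$. That is \emph{not} the standard initialization-independence of a source's own entropy rate: here $g_2$ is a mismatched scorer with deterministic transitions, so trajectories started from different $t$ need never couple (e.g.\ a parity-type transition structure keeps them out of phase forever), and the limit of $-\tfrac1n\log q(\mathbf{x}^n\mid t)$ is governed by which absorbing strongly connected component of $g^{*}_{\textrm{c}}\parenBar{g_1}{g_2}$ the pair $(s,t)$ falls into --- the paper explicitly does not rule out there being several, with different stationary distributions. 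So as written the step can fail for general PFSAs; it is harmless for M2 (both machines synchronize to the state indexed by the last symbol after one step) and whenever the composition has a unique absorbing component, but to make the general argument go through you should either restrict the initial pair $(S_1,T_1)$ to lie in the chosen component $g_{\textrm{c}}$ (initializing it with $\mathbf{p}_{g_{\textrm{c}}}$, as the paper effectively does), or prove coupling/uniqueness separately. Everything else in your outline --- the chain rule, the per-state computation of the conditional expectation, the ergodic occupancy limit, and Ces\`aro --- is sound and matches the paper.
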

\begin{proof}
See Appendix~\ref{Appendix:KL}.
\end{proof}
In light of this theorem,  one can easily show   
\begin{equation*}
    D_{\textrm{KL}}\parenBar{g_1}{g_2} = \frac{\nu_1 D_{\textrm{KL}}\parenBar{\mu_1}{\mu_2}}{\bar{\mu}_1 + \nu_1} + \frac{\bar{\mu}_1D_{\textrm{KL}}\parenBar{\nu_1}{\nu_2}}{\bar{\mu}_1 + \nu_1}.
\end{equation*}

\subsection{Convergence of log likelihood}\label{subsec:ThmLlhConvergence}
According to Shannon-McMillan-Breiman Theorem \cite[Theorem 16.8.1]{Cover:EIT}, we have $-\frac{1}{n}\log p_{g}(\mathbf{x})\to H(g)$ for any sequence $\mathbf{x}\leftarrow g$. A natural question is that what the log-likelihood converges to if $\mathbf{x}$ is generated by a different machine. The following theorem states that the log-likelihood converges to entropy of generating machine plus the KL divergence which accounts for the mismatch.
\begin{theorem}\label{thm:LLHConvergenceM2}
For any $\mathbf{x}^n\leftarrow g\in M2$, we have with probability one
\begin{equation*}
    -\frac{1}{n}\sum_{i = 1}^{n}\log{p_{g'}\paren{x_i|\mathbf{x}^{i-1}}}\rightarrow H(g) + D_{\textrm{KL}}\parenBar{g}{g'},
\end{equation*}
for any PFSA $g'\in M2$.
\end{theorem}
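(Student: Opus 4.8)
The plan is to recognize the empirical average as a time average of an additive functional of an ergodic Markov chain and to apply the pointwise (Birkhoff) ergodic theorem, then to match the resulting stationary expectation against Theorems~\ref{thm:EntropyRate} and~\ref{thm:KLDivergence}. The first step exploits a \emph{synchronization} property of M2: since each $\Gamma_{g',x}$ has a single nonzero column, the update rule $\mathbf{p}^T_i\propto\mathbf{p}^T_{i-1}\Gamma_{x_i}$ in \eqref{Def_Likelihood} collapses the induced state distribution of $g'$ to a point mass after reading a single symbol, namely the indicator of $s_0$ if $x_{i-1}=0$ and of $s_1$ if $x_{i-1}=1$. Hence for $i\ge 2$ the conditional likelihood is simply $p_{g'}(x_i\mid\mathbf{x}^{i-1})=(\widetilde{P}_{g'})_{t_i,x_i}$, where $t_i\in\S$ is the deterministic state of $g'$ after reading $\mathbf{x}^{i-1}$. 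The $i=1$ term is $O(1/n)$ and may be discarded.

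Next I would set up the joint chain. Let $S_i$ be the state of $g$ just before emitting $x_i$ and let $t_i$ be the synchronized state of $g'$ above; the pair $(S_i,t_i)$ evolves exactly as the synchronous composition $g_{\textrm{c}}=g_{\textrm{c}}\parenBar{g}{g'}$, with symbols emitted according to $(\widetilde{P}_g)_{S_i,\cdot}$. The sum in question then becomes $-\tfrac1n\sum_{i}\log(\widetilde{P}_{g'})_{t_i,x_i}$, an additive functional of this chain whose summand is bounded because every M2 machine has emission probabilities in $(0,1)$. After a finite (hence asymptotically negligible) transient the chain sits in its absorbing strongly connected component, which has a unique stationary distribution $\mathbf{p}_{g_{\textrm{c}}}$, so the ergodic theorem yields almost-sure convergence of the average to
\[
\sum_{s\in\S,\,t\in\T}(\mathbf{p}_{g_{\textrm{c}}})_{(s,t)}\sum_{x\in\X}(\widetilde{P}_{g})_{s,x}\paren{-\log(\widetilde{P}_{g'})_{t,x}}.
\]

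To identify this limit I would split $-\log(\widetilde{P}_{g'})_{t,x}=\log\frac{(\widetilde{P}_g)_{s,x}}{(\widetilde{P}_{g'})_{t,x}}-\log(\widetilde{P}_g)_{s,x}$, so the inner sum over $x$ equals $D_{\textrm{KL}}\parenBar{(\widetilde{P}_g)_{s,\cdot}}{(\widetilde{P}_{g'})_{t,\cdot}}+H\paren{(\widetilde{P}_g)_{s,\cdot}}$. Summed against $\mathbf{p}_{g_{\textrm{c}}}$, the divergence term is precisely $D_{\textrm{KL}}\parenBar{g}{g'}$ by Theorem~\ref{thm:KLDivergence}. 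For the entropy term, the first coordinate of $g_{\textrm{c}}$ evolves exactly as $g$'s own Markov chain, so the first-coordinate marginal of $\mathbf{p}_{g_{\textrm{c}}}$ is a stationary distribution of $g$; by uniqueness (strong connectedness) it equals $\mathbf{p}_g$, whence $\sum_{s}(\mathbf{p}_g)_s H\paren{(\widetilde{P}_g)_{s,\cdot}}=H(g)$ by Theorem~\ref{thm:EntropyRate}. Adding the two contributions gives $H(g)+D_{\textrm{KL}}\parenBar{g}{g'}$, as claimed.

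The main obstacle is the rigorous justification of the almost-sure convergence: one must verify that, after the transient, $(S_i,t_i)$ restricted to the absorbing strongly connected component is an irreducible finite-state chain so that the ergodic theorem applies, that the transient contributes $o(n)$ to the normalized sum, and that boundedness of $-\log(\widetilde{P}_{g'})_{t,x}$ (guaranteed by $\mu,\nu\in(0,1)$) secures integrability and finiteness of $D_{\textrm{KL}}\parenBar{g}{g'}$. A secondary point requiring care is the marginalization identity $\sum_{t}(\mathbf{p}_{g_{\textrm{c}}})_{(s,t)}=(\mathbf{p}_g)_s$, which rests on the equivalence of $g_{\textrm{c}}$ to $g$ (Theorem~\ref{Thm:Equivalence}) together with uniqueness of the stationary distribution.
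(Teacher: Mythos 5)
Your proposal is correct, and it shares the paper's key observation---the synchronization property of M2, namely that after reading one symbol the induced state distribution of any M2 machine collapses to a point mass, so every conditional likelihood depends on $(x_{i-1},x_i)$ alone---but it is organized along a genuinely different route. The paper first splits the sum as $-\frac{1}{n}\log p_g(\mathbf{x})+\frac{1}{n}\sum_i\log\frac{p_g(x_i|\mathbf{x}^{i-1})}{p_{g'}(x_i|\mathbf{x}^{i-1})}$, dispatches the first term by Shannon--McMillan--Breiman, and evaluates the second by writing each $Z_i$ explicitly in terms of the four bigram indicators $1_{\{x_{i-1}=a,\,x_i=b\}}$ and letting the empirical bigram frequencies converge; the limit is then read off as the closed-form M2 expression for $D_{\textrm{KL}}\parenBar{g}{g'}$. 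You instead apply the ergodic theorem once to the cross-likelihood $-\log\paren{\widetilde{P}_{g'}}_{t_i,x_i}$ viewed as an additive functional of the synchronous composition chain, and only afterwards split the limiting expectation into entropy and divergence pieces via Theorems~\ref{thm:EntropyRate} and~\ref{thm:KLDivergence} together with the marginalization identity of Proposition~\ref{prop:stationaryOfComp}. Your route buys uniformity and generality: it needs no separate appeal to Shannon--McMillan--Breiman, and it would extend essentially verbatim to any pair of PFSAs for which the observer's state estimate becomes a deterministic function of the observed suffix, which is precisely the generalization beyond M2 that the authors list as future work. The paper's route buys concreteness: the explicit bigram counting yields the M2 limit without invoking the synchronous-composition machinery of the appendix. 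The two points you flag as needing care---that the transient before the chain enters the absorbing strongly connected component contributes only $o(n)$ to the normalized sum, and that $\mu,\nu\in(0,1)$ bounds $-\log\paren{\widetilde{P}_{g'}}_{t,x}$ so the ergodic theorem applies and $D_{\textrm{KL}}\parenBar{g}{g'}$ is finite---are indeed the only loose ends, and both are routine for a finite irreducible chain.
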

\begin{proof}
First note that 
\begin{equation}
    -\frac{1}{n}\sum_{i = 1}^{n}\log{p_{g'}(x_i|\mathbf{x}^{i-1})}=-\frac{1}{n}\log p_{g}(\mathbf{x})+ \frac{1}{n}\sum_{i=1}^n\log\frac{p_{g}(x_i|\mathbf{x}^{i-1})}{p_{g'}(x_i|\mathbf{x}^{i-1})}. \label{Proof_LLR_Convergence}
\end{equation}
Clearly, the first term in the above sum converges to $H(g)$. To show the convergence of the second term, let $Z_i=\log\frac{p_{g}\paren{x_i|\mathbf{x}^{i-1}}}{p_{g'}\paren{x_i|\mathbf{x}^{i-1}}}$. Notice that for any PFSA $g$ in M2 and for $1\leq i\leq n$, $\mathbf{p}_{g}(\mathbf{x}^{i})$ equals $[1, 0]$ for all $\mathbf{x}^{i}$ with $x_{i} = 0$, and to $[0, 1]$ for all $\mathbf{x}^{i}$ with $x_{i} = 1$, and hence  the process $\{Z_i\}_{i=1}^n$ is a Markov process. Let $\mathcal{Z}^0$ and $\mathcal{Z}^1$ denote the set of indices $i$ such that $x_{i-1} = 0$ and $x_{i - 1} = 1$, respectively. Then we have
\begin{equation}\label{Proof_PartialSum_Z}
    \frac{1}{n}\sum_{i=1}^nZ_i=\frac{1}{n}\sum_{i\in \mathcal{Z}_0}Z_i + \frac{1}{n}\sum_{i\in \mathcal{Z}_1}Z_i.
\end{equation}
It is straightforward to show that for all $i\in \Z_0$ 
$$Z_i=1_{\{x_i=0\}}\log\frac{\mu_g}{\mu_{g'}} + 1_{\{x_i=1\}}\log\frac{\bar{\mu}_g}{\bar{\mu}_{g'}},$$
and for all $i\in \Z_1$
$$Z_i=1_{\{x_i=0\}}\log\frac{\nu_g}{\nu_{g'}} + 1_{\{x_i=1\}}\log\frac{\bar{\nu}_g}{\bar{\nu}_{g'}}.$$
It follows from \eqref{Proof_PartialSum_Z} that 
\begin{eqnarray*}
\frac{1}{n}\sum_{i=1}^nZ_i&=&\frac{1}{n}\paren{\log\frac{\mu_g}{\mu_{g'}}}\sum_{i=1}^n1_{\{x_{i-1}=0,x_i=0\}} + \frac{1}{n} \paren{\log\frac{\bar{\mu}_g}{\bar{\mu}_{g'}}}\sum_{i=1}^n1_{\{x_{i-1}=0,x_i=1\}} + \frac{1}{n}\paren{\log\frac{\nu_g}{\nu_{g'}}}\sum_{i=1}^n1_{\{x_{i-1}=1,x_i=0\}}\\
&&+ \frac{1}{n}\paren{\log\frac{\bar{\nu}_g}{\bar{\nu}_{g'}}}\sum_{i=1}^n1_{\{x_{i-1}=1,x_i=1\}}\\
&\stackrel{n\to \infty}\longrightarrow& 
\mathbf{p}_g(0) \paren{\mu_g \log\frac{\mu_g}{\mu_{g'}} + \bar{\mu}_g \log\frac{\bar{\mu}_g}{\bar{\mu}_{g'}}}+ \mathbf{p}_g(1) \paren{\nu_g \log\frac{\nu_g}{\nu_{g'}} + \bar{\nu}_g \log\frac{\bar{\nu}_g}{\bar{\nu}_{g'}}}.\qedhere
\end{eqnarray*}
\end{proof}
For ease of presentation, we define
\[
    L\paren{g', \mathbf{x}^n\leftarrow g} \coloneqq -\frac{1}{n}\sum_{i = 1}^{n}\log{p_{g'}(x_i|\mathbf{x}^{i-1})}.
\]
When the generating machine $g$ is not known, we use $L\paren{g', \mathbf{x}^n}$ to identify likelihood of $g'$ generating $x$.

\section{Algorithm and simulation}\label{Alg&Simulation}

\subsection{Decoding}\label{subsec:Decoding}
In this and the following section, we assume that we have a set of PFSAs $\G = \set{g_1,\dots, g_{|\M|}}$, with $g_i\in \text{M2}$ for all $i$. We will briefly discuss heuristics on how to generate a set of PFSAs that are good for tamper detecting and decoding in Section\ref{subsec:Design}.

We saw in Theorem \ref{thm:LLHConvergenceM2} that
\begin{equation}\label{eq:goodSeparationPFSAs}
    L\paren{g_j(\delta), \mathbf{x}^{n}\leftarrow g_i(\delta)} \rightarrow H(g_i(\delta)) +D_{\textrm{KL}}\parenBar{g_i(\delta)}{g_j(\delta)},
\end{equation}
which motivates the following definition for the decoding function in Fig.~\ref{Fig1} \[
    \psi(\mathbf{x}) = \arg\min_{m\in\M}L\paren{g_m(\delta), \mathbf{x}^{n}}.
\]
We apply this decoding strategy in Fig.~\ref{fig:pointsAndErrorRate} when $\delta=.2$ and two different message sets with $|\M|=10$ or $|\M|=20$.   
\begin{figure}[ht]
    \centering
    \subfigure[]{\includegraphics[scale=.34]{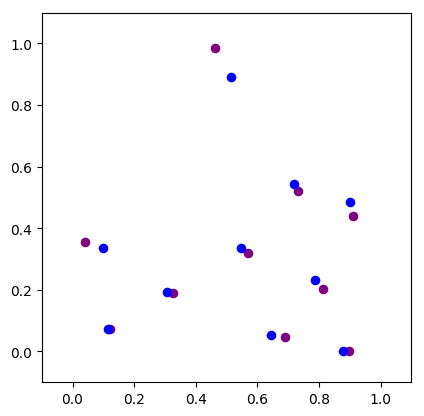}}
    \subfigure[]{\includegraphics[scale=.34]{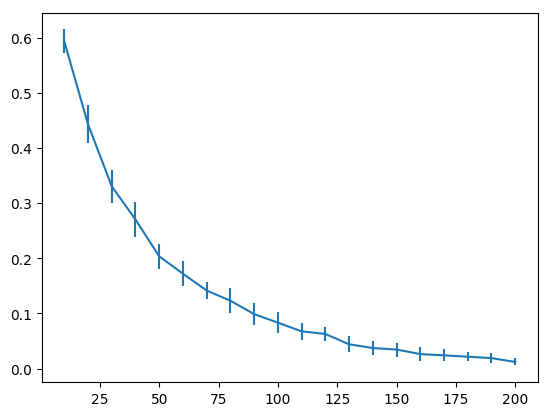}}\hspace{.5cm}
    \subfigure[]{\includegraphics[scale=.34]{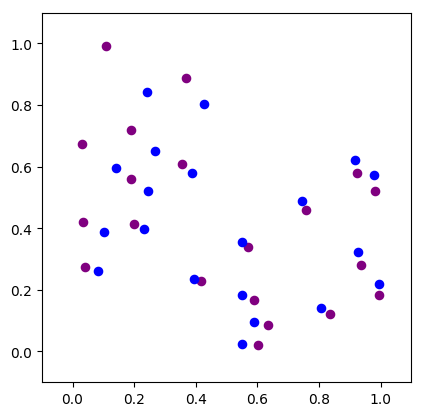}}
    \subfigure[]{\includegraphics[scale=.34]{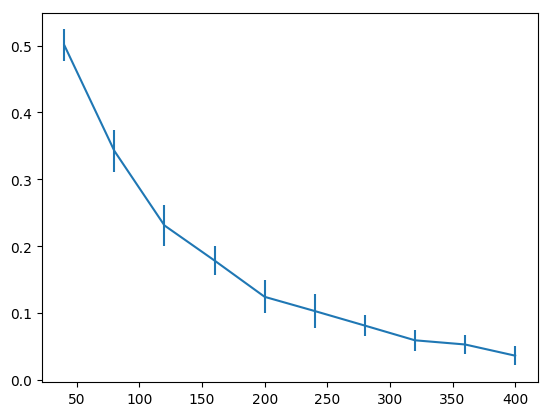}}
    \caption{(a) and (c) shows $10$ PFSAs and $20$ PFSAs in the parameter space, respectively, with purple dots for the $g_m$'s and blue dots for $g_m(.2)$'s. Error rates for input sequences of length $10$ to $200$ for the $10$ messages, and for input sequences of length $40$, $400$ for the $20$ messages are showed in (b) and (d), respectively. The results are averaged over $20$ and $10$ re-runs.}
    \label{fig:pointsAndErrorRate}
\end{figure}

\subsection{Tamper detecting}\label{subsec:DetectTampering}
We assume that active eavesdropper tampers the channel in such a way that $\delta'-\delta>\eta$ with some $\eta\geq0$. Following Theorems  \ref{thm:MonotonicityDeltaM2} and \ref{thm:LLHConvergenceM2}, we get
\begin{align}\label{eq:goodSeparationDeltas}
    L\paren{g_j(\delta), \mathbf{x}\leftarrow g_i(\delta')} &\rightarrow H(g_i(\delta')) + D_{\textrm{KL}}\parenBar{g_i(\delta')}{g_j(\delta)}\nonumber\\
    &\geq H(g_i(\delta)),
\end{align}
where the inequality is due to  Theorem~\ref{thm:MonotonicityDeltaM2}. Hence, tampering the channel results in an increase in the likelihood. This leads to our temper detecting procedure detailed in Algorithm~\ref{Algorithm1}.

\begin{algorithm}
    \SetKwInOut{Input}{input}
    \SetKwInOut{Output}{output}
    \Input{$\set{g_m}_{m\in\M}$, $\mathbf{x}_1, \dots, \mathbf{x}_k$, $\delta, \eta$, $\varepsilon$}
    \Output{$T$ with $T = 0$ if no tampering, $1$ if otherwise}
    $H_0 = \paren{H(g_m(\delta))}_{m\in\M}$\;
    $H_1 = \paren{H(g_m(\delta + \eta))}_{m\in\M}$\;
    $D = H_1 - H_0$\;
     $v = 0$\tcc*[r]{the weighted vote}
    \For{$i = 1,\dots, k$}{
        $d = \arg\min_{m\in\M}L\paren{g_m(\delta), \mathbf{y}_i}$\;
        $e = L\paren{g_d(\delta), \mathbf{y}_i}$\;
        \If{$e - H\paren{g_d(\delta)} > \varepsilon \cdot D[d]$}{
            $v = v + 1 \cdot D[d]$\;
        }
    }
    $S = \sum_{m\in\M}D[m]$\;
    \eIf{$v/(S \cdot k) > 0.5$}{
        \KwRet $T = 1$\;
    }{
        \KwRet $T = 0$\;
    }
    \caption{Tampering detection}
    \label{Algorithm1}
\end{algorithm}
\begin{table}[ht]
    \centering
    \begin{tabular}{r|rrr|rrr|rrr|}
            \cline{2-10}
            & \multicolumn{3}{c}{$\varepsilon = 0$}& \multicolumn{3}{|c|}{$\varepsilon = 0.05$}& \multicolumn{3}{c|}{$\varepsilon = 0.10$}\\
            \cline{1-10}
        $50$  & $.16$ & $.12$ & $.28$ & $.26$ & $.08$ & $.34$ & $.24$ & $.18$ & $.32$\\
        $100$ &   $0$ & $.26$ & $.26$ & $.06$ & $.20$ & $.26$ & $.08$ & $.08$ & $.16$\\
        $150$ &   $0$ & $.16$ & $.16$ & $.02$ & $.22$ & $.14$ &   $0$ & $.08$ & $.08$\\
        $200$ &   $0$ & $.18$ & $.18$ &   $0$ & $.20$ & $.20$ &   $0$ & $.06$ & $.06$\\
        \cline{1-10}
            & \multicolumn{3}{c}{$\varepsilon = 0.15$}& \multicolumn{3}{|c|}{$\varepsilon = 0.20$}& \multicolumn{3}{c|}{$\varepsilon = 0.25$}\\
        \cline{1-10}
        $50$  & $.36$ & $.02$ & $.38$ & $.32$ & $.02$ & $.34$ & $.26$ & $.10$ & $.36$\\
        $100$ & $.08$ & $.08$ & $.16$ & $.08$ & $.02$ & $.10$ & $.14$ & $.04$ & $.18$\\
        $150$ &   $0$ & $.08$ & $.08$ & $.02$ & $.02$ & $.04$ & $.02$ & $.02$ & $.04$\\
        $200$ &   $0$ &   $0$ &   $0$ &   $0$ & $.04$ & $.04$ &   $0$ & $.02$ & $.02$\\
        \cline{1-10}
    \end{tabular}
    \caption{The table above records the error rates of tamper detection algorithm for sending $10$ messages through a channel with deletion probability $\delta = .2$. We generate $50$ test sets containing $k = 200$ sequences, with $20$ for each message. We assign randomly whether a particular test set will be tampered or not. For simplicity, if a test set is tampered it will have a fixed deletion probability $\delta = .3$. We run the algorithm for input sequence of length $50, 100, 150$, and $200$, and for $\varepsilon = 0, .05, .10, .15, .20, .25$. For each block, the first column is the rate of failing to detect a tampering, and the second column is the rate of false alarm of a tampering, and the last column is the sum of two error rates. We can see that with increased cutoff value $\varepsilon$, we have significantly fewer false alarms without too much increase in the rate of failing to detect a true tampering.}
    \label{tab:tamperingDetectionError}
\end{table}
\subsection{Generate machines with good separation}\label{subsec:Design}
For fixed number of messages, we need to choose a set of M2 PFSAs with the best decoding and tamper detection performance. It is important to indicate that (1) decoding error will be significantly lowered by increasing $D(g_i\|g_j)$ according to \eqref{eq:goodSeparationPFSAs}, and (2) the tampering detection error will be improved by making sure $|H(g(\delta)) - H(g(\delta'))|$ is large for $\delta'-\delta\geq \eta$, according to \eqref{eq:goodSeparationDeltas}. However, there is a trade-off here -- to increase pairwise KL divergence, we want the machines to be spread more evenly in the parameter space while, according to Theorem \ref{thm:MonotonicityDeltaM2}, to increase $H(g(\delta')) - H(g(\delta)$, we need the machines to stay away from being single-state, i.e.~away from the $\mu = \nu$ line. 

Here, we describe briefly how we design $\G$ for experiment in Fig.~\ref{fig:pointsAndErrorRate}.  As a naive way, we start off with $|\M|$ randomly generated $\mu$'s in $(0, 1)$, and for each of them we generate $\nu$ in the following way: if $\mu > .5$, then we choose a $\nu$ randomly in $(0, \mu - .2)$, and if $\mu \leq .5$, in $(\mu + .2, 1.)$. Then, we use a hill-climbing algorithm to maximize minimum pairwise averaged KL divergence, $.5\paren{D_{\textrm{KL}}\parenBar{g_1}{g_2} + D_{\textrm{KL}}\parenBar{g_2}{g_1}}$, between all pair machines. Let $\sigma$ be step size, for a pair $g_{(\mu_1, \nu_1)}$ and $g_{(\mu_2, \nu_2)}$ with minimum averaged KL divergence, we search the eight neighboring points, $(\mu_i\pm\sigma, \nu_i)$ and $(\mu, \nu_i\pm\sigma)$, $i=1, 2$, for improvement. We exit the search when there is no improvement to be found.
\section{Conclusion and future work}
In this paper, we developed a new information-theoretic coding scheme for information transfer over a public deletion channel, subject to an active eavesdropper (aka jammer). Our coding scheme is based on probabilistic finite-state automata (PFSA) and is proved to have (1) semi-universal property, in a sense that codebook need not be available at the decoder, (2) small error probability when decoding messages, and (3) tamper-free property, which alarms the decoder about possible tampering of the channel.  To the best of our knowledge, exploiting PFSA's in a secure and reliable information-theoretic communication model is very new, yet very insightful. Promising results in both theoretical and experimental aspects of this work lead to several research directions: 
\begin{itemize}
    \item To have an analytically better analysis of error probability, the convergence rate of likelihood in Theorem\ref{thm:LLHConvergenceM2} for general PFSA is needed.
    \item We admit that the space of M2 is too small to have simultaneous vanishing error probability (with small $n$) in message decoding and tamper detecting. To go beyond M2, we need to find an analytic way to compute entropy rate and KL divergence for generalized PFSA.  
\end{itemize}

\bibliographystyle{IEEEtran}
\bibliography{bibliography}

\begin{thebibliography}{10}
\providecommand{\url}[1]{#1}
\csname url@samestyle\endcsname
\providecommand{\newblock}{\relax}
\providecommand{\bibinfo}[2]{#2}
\providecommand{\BIBentrySTDinterwordspacing}{\spaceskip=0pt\relax}
\providecommand{\BIBentryALTinterwordstretchfactor}{4}
\providecommand{\BIBentryALTinterwordspacing}{\spaceskip=\fontdimen2\font plus
\BIBentryALTinterwordstretchfactor\fontdimen3\font minus
  \fontdimen4\font\relax}
\providecommand{\BIBforeignlanguage}[2]{{%
\expandafter\ifx\csname l@#1\endcsname\relax
\typeout{** WARNING: IEEEtran.bst: No hyphenation pattern has been}%
\typeout{** loaded for the language `#1'. Using the pattern for}%
\typeout{** the default language instead.}%
\else
\language=\csname l@#1\endcsname
\fi
#2}}
\providecommand{\BIBdecl}{\relax}
\BIBdecl

\bibitem{mitzenmacher2009}
M.~Mitzenmacher, ``A survey of results for deletion channels and related
  synchronization channels,'' \emph{Probability Surveys}, vol.~6, pp. 1--33,
  2009.

\bibitem{Dob67}
R.~L. Dobrushin, ``Shannon's theorems for channels with synchronization
  errors,'' \emph{Problems Inform. Transmission}, vol.~3, no.~4, pp. 11--26,
  Oct. 1967.

\bibitem{deletion2}
A.~Kirsch and E.~Drinea, ``Directly lower bounding the information capacity for
  channels with i.i.d. deletions and duplications,'' \emph{IEEE Trans. Inf.
  Theory}, vol.~56, no.~1, pp. 86--102, Jan 2010.

\bibitem{deletion1}
M.~Mitzenmacher and E.~Drinea, ``A simple lower bound for the capacity of the
  deletion channel,'' \emph{IEEE Trans. Inf. Theory}, vol.~52, no.~10, pp.
  4657--4660, Oct. 2006.

\bibitem{yamamotoequivocationdistortion}
H.~Yamamoto, ``A source coding problem for sources with additional outputs to
  keep secret from the receiver or wiretappers,'' \emph{IEEE Trans. Inf.
  Theory}, vol.~29, no.~6, pp. 918--923, Nov. 1983.

\bibitem{Secure_lossless_compression}
D.~G\"{u}nd\"{u}z, E.~Erkip, and H.~Poor, ``Secure lossless compression with
  side information,'' in \emph{Proc. IEEE Information Theory Workshop}, May
  2008, pp. 169--173.

\bibitem{Secure_lossy_coding}
E.~Ekrem and S.~Ulukus, ``Secure lossy source coding with side information,''
  in \emph{Proc. Annual Allerton Conference on Communication, Control, and
  Computing}, Sept. 2011, pp. 1098--1105.

\bibitem{Cover_State_Amplification}
Y.-H. Kim, A.~Sutivong, and T.~Cover, ``State amplification,'' \emph{IEEE
  Trans. Inf. Theory,}, vol.~54, no.~5, pp. 1850--1859, May 2008.

\bibitem{secrecy1}
K.~Kittichokechai, Y.~K. Chia, T.~J. Oechtering, M.~Skoglund, and T.~Weissman,
  ``Secure source coding with a public helper,'' \emph{IEEE Trans. Inf.
  Theory}, vol.~62, no.~7, pp. 3930--3949, July 2016.

\bibitem{secrecy2}
Y.~Kaspi and N.~Merhav, ``Zero-delay and causal secure source coding,''
  \emph{IEEE Trans. Inf. Theory}, vol.~61, no.~11, pp. 6238--6250, Nov 2015.

\bibitem{secrecy3}
J.~Villard and P.~Piantanida, ``Secure multiterminal source coding with side
  information at the eavesdropper,'' \emph{IEEE Trans. Inf Theory}, vol.~59,
  no.~6, pp. 3668--3692, June 2013.

\bibitem{Asoodeh_Allerton2015}
S.~Asoodeh, F.~Alajaji, and T.~Linder, ``Lossless secure source coding:
  Yamamoto's setting,'' in \emph{Annual Allerton Conference on Communication,
  Control, and Computing (Allerton)}, Sept 2015, pp. 1032--1037.

\bibitem{universal1}
V.~Misra and T.~Weissman, ``Unsupervised learning and universal
  communication,'' in \emph{IEEE Inter. Symp. Inf. Theory}, July 2013, pp.
  261--265.

\bibitem{Shannon}
C.~E. Shannon, ``{A mathematical theory of communication},'' \emph{Bell system
  technical journal}, vol.~27, no.~2, 1948.

\bibitem{Cover:EIT}
T.~M. Cover and J.~A. Thomas, \emph{Elements of Information Theory}.\hskip 1em
  plus 0.5em minus 0.4em\relax Wiley-Interscience, 2006.

\end{thebibliography}

\section*{Appendix}
\subsection{Proof for Theorem \ref{thm:EntropyRate}} \label{Appendix_Entropy}
Following the standard notation in information theory, we use $X^n$ to denote a random vector $(X_1, \dots, X_n)$ generated from a PFSA $g$ and $H(X^n)$ to denote the entropy its entropy, that is $H(X^n)=H_n(g)$. We can similarly define the conditional entropy $H(X_n|X^{n-1})$. It is shown in \cite{Cover:EIT} that $\lim_{n\rightarrow\infty}\frac{1}{n}H\paren{X^{n}}=\lim_{n\rightarrow\infty}H\paren{X_{n}|X^{n-1}}$ for any stationary processes $\{X_n\}_{n=1}^\infty$. In order to compute the entropy rate, we can therefore focus on the latter limit. Let $S\sim \mathbf{p}$ denote a random variable indicating the initial state of the PFSA. We have
\begin{align*}
    H\paren{X_n|X^{n-1}} &= H\paren{X^{n}} - H\paren{X^{n-1}}\\
    &=\bracket{H\paren{X^{n}, S} - H\paren{S|X^{n}}}- \bracket{H\paren{X^{n-1}, S} - H\paren{S|X^{n-1}}}\\
    &=\bracket{H\paren{X^{n}, S} - H\paren{X^{n-1}, S}} + \bracket{H\paren{S|X^{n-1}} - H\paren{S|X^{n}}}\\
    &=\bracket{H\paren{X^{n}|S} + H(S) - H\paren{X^{n-1}|S} - H(S)} + \bracket{H\paren{S|X^{n-1}} - H\paren{S|X^{n}}}\\
    &=\underbrace{H\paren{x_n|S,X^{n-1}}}_{\eqqcolon A_n} + \underbrace{\left[H\paren{S|X^{n-1}} - H\paren{S|X^{n}}\right]}_{\eqqcolon B_n}.
\end{align*}
Note that for any $N\geq 1$
\[
    \sum_{n = 1}^{N}B_n = \sum_{n = 1}^{N}H\paren{S|X^{n-1}} - H\paren{S|X^{n}} = H(S) - H\paren{S|X^{N}} \leq H(S) = H\paren{\mathbf{p}}.
\]
\begin{figure}[t]
    \centering
    \subfigure[$g_1\in$M2]{\includegraphics[scale=1.]{M2.pdf}} \hspace{1.3cm}
    \subfigure[$g_2\in$M2]{\includegraphics[scale=1.]{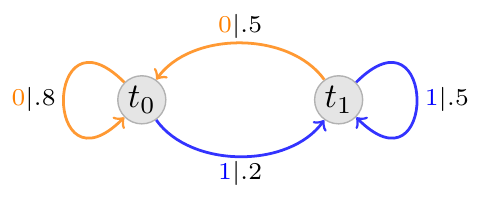}}
    
    \subfigure[$g^{*}_{\textrm{c}}\parenBar{g_1}{g_2}$]{\includegraphics[scale=.7]{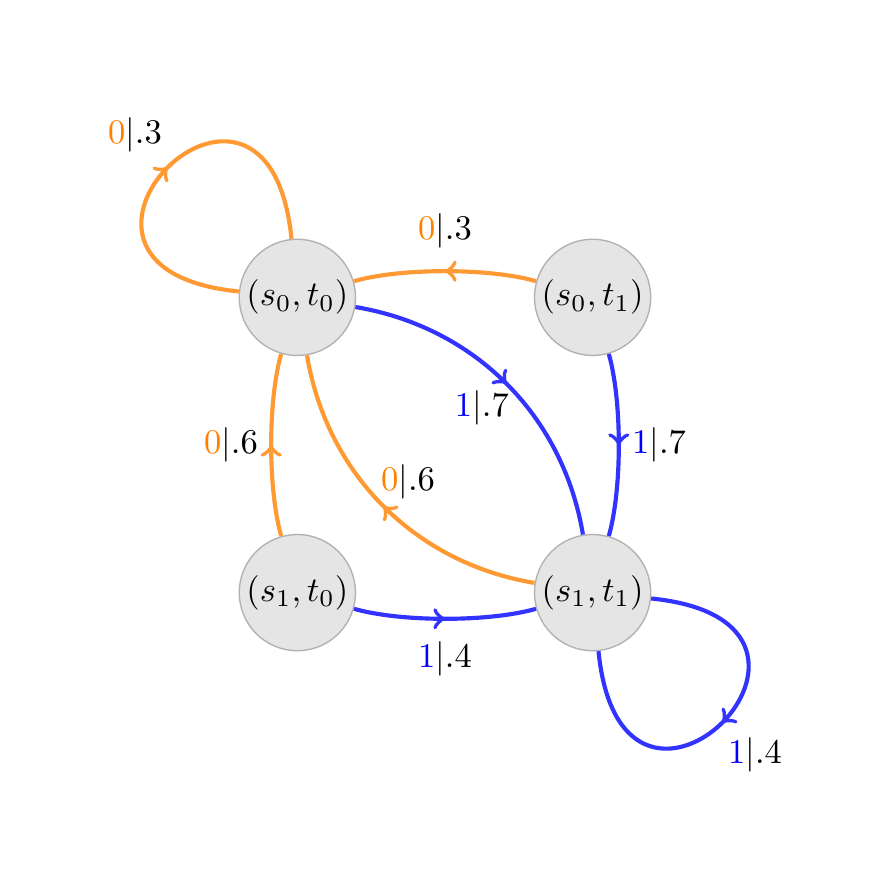}}
    \subfigure[strongly connected component of $g^{*}_{\textrm{c}}\parenBar{g_1}{g_2}$]{\includegraphics[scale=.7]{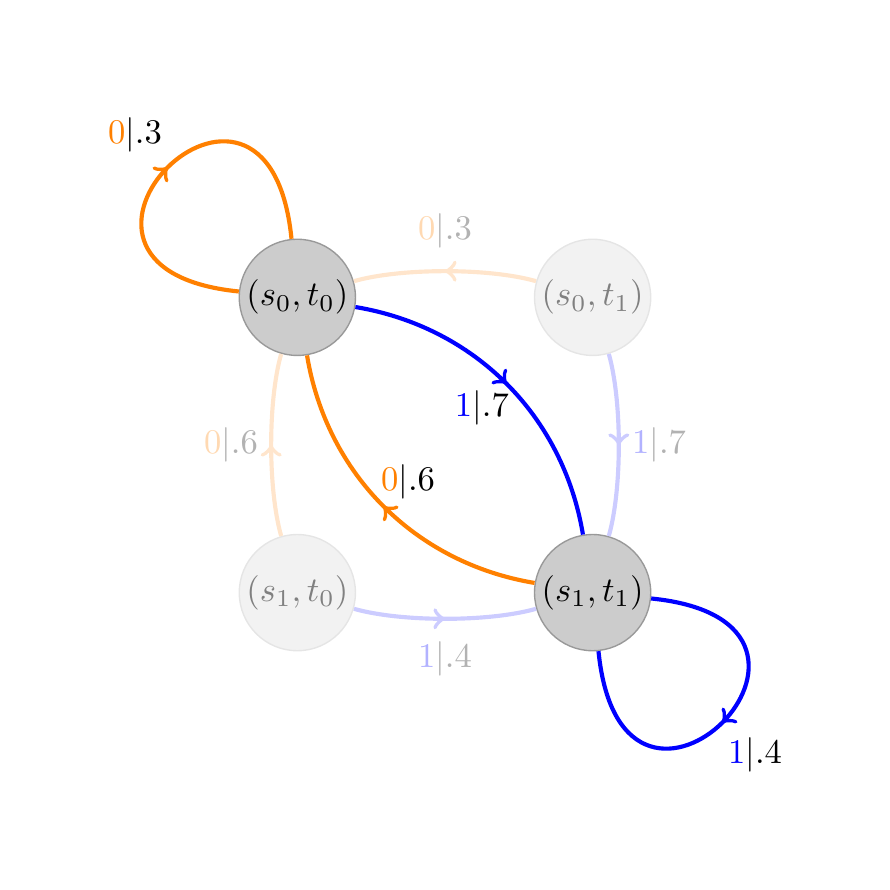}}
    \caption{The example above shows that the $g^{*}_c$ of two strongly connected PFSAs may \emph{not} remain strongly connected. We can see that in this case, $g_{\textrm{c}}\parenBar{g_1}{g_2}$ is equal to $g_1$.}
    \label{fig:SyncCompM2&M2}
\end{figure}
Since $B_n$ is nonnegative for each $n$ and $\sum_{n=1}^N B_n$ is bounded from above, it follows that $\lim_{n\rightarrow\infty}B_n = 0$. It remains to analyze $A_n$. 
Notice that the state at time $n$ is a deterministic function of $S$ and $X^{n-1}$ (that is $\mathsf{T}(s, X^{n-1})$) and hence we can write
\begin{align*}
    H\paren{X_n|S,X^{n-1}} &= \sum_{s'\in\S}H\paren{\widetilde{P}_{s',\cdot}}\Pr\set{\mathsf{T}(S, X^{n-1})=s'}.
\end{align*}
By induction, we have for any $s'\in \S$
\begin{align*}
    \Pr\set{\mathsf{T}(S, X^{n-1})=s'} &= \sum_{x\in\X}\sum_{s''\in\S}\Pr\set{\mathsf{T}(s, X^{n-2})=s''}\paren{\Gamma_{x}}_{s'', s'}\\
    &=\sum_{s''\in\S}\Pr\set{\mathsf{T}(s, X^{n-2})=s''}P_{s'', s'},
\end{align*}
and hence
\[
    \paren{\Pr\set{\mathsf{T}(s, X^{n-1})=s}}_{s\in\S} = \paren{\Pr\set{\mathsf{T}(s, X^{n-2})=s}}_{s\in\S}P = \cdots = \mathbf{p}P^{n-1} = \mathbf{p}.\qedhere
\]

\begin{figure}[h]
    \centering
    \subfigure[$g_3$]{\includegraphics[trim=0 -1cm 0 0, clip,scale=1.]{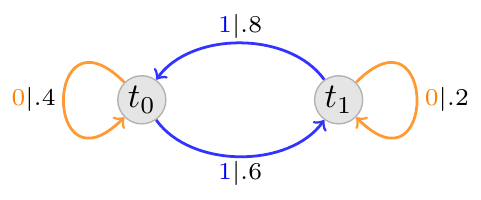}}\hspace{1.cm}
    \subfigure[$g_{\textrm{c}}\parenBar{g_1}{g_3}$]{\includegraphics[scale=.7]{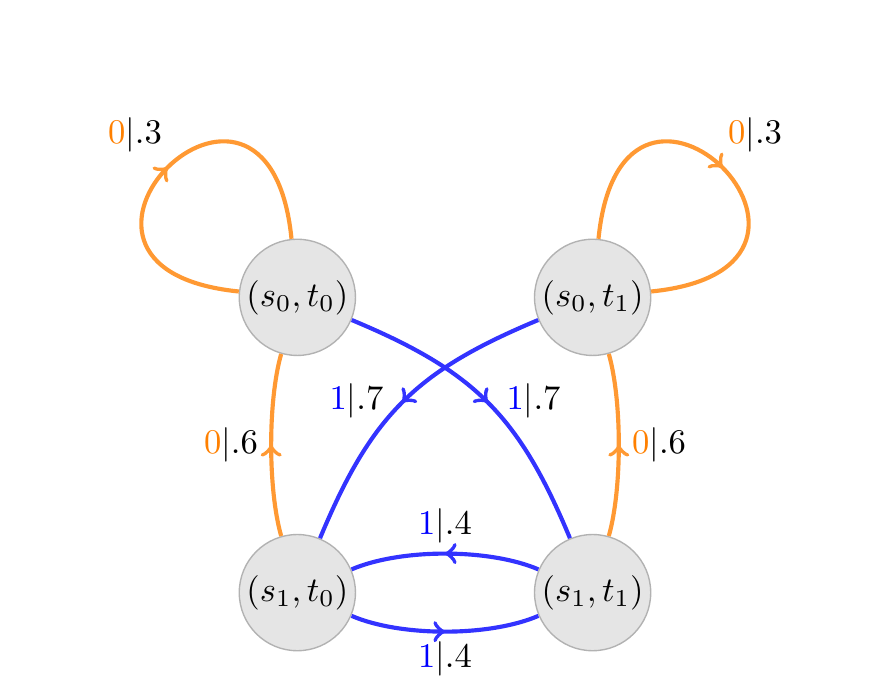}}
    \caption{$g_{\textrm{c}}\parenBar{g_1}{g_3}$ is strongly connected. The stationary distribution of $g_{\textrm{c}}\parenBar{g_1}{g_3}$ is $(.231, .231,  .269, .269)$, while the stationary distribution of $g_1$ is $(.462, .538)$, both rounded to 3 decimal places.}
    \label{fig:SyncCompM2&S}
\end{figure}

\begin{figure}[h]
    \centering
    \subfigure[$g_4$]{\includegraphics[trim=0 -.5cm 0 0, clip, scale=1.]{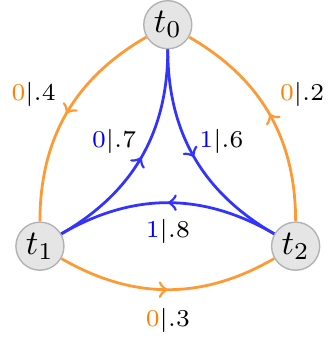}} \hspace{1.3cm}
    \subfigure[$g_{\textrm{c}}\parenBar{g_1}{g_4}$]{\includegraphics[scale=.7]{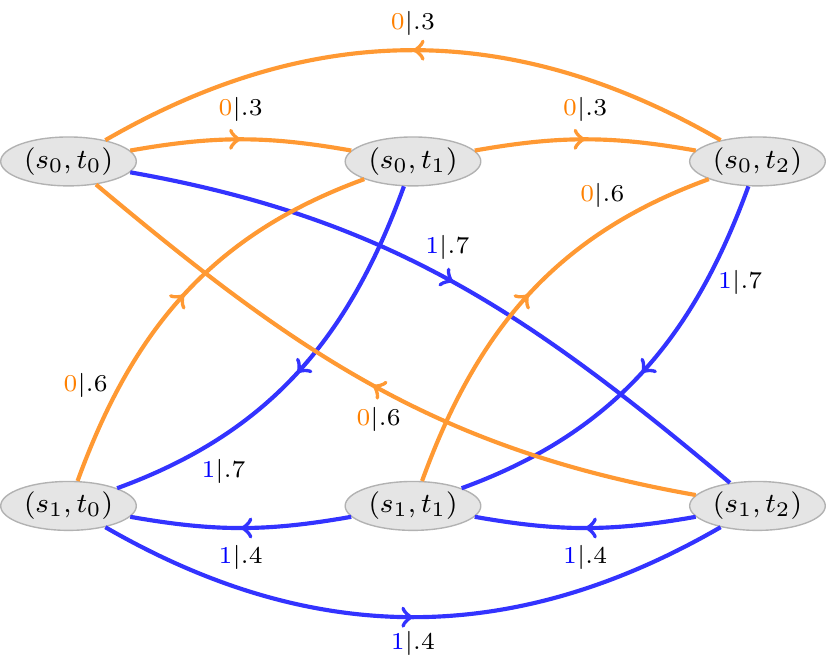}}
    \caption{$g_{\textrm{c}}\parenBar{g_1}{g_4}$ is strongly connected. The stationary distribution of $g_{\textrm{c}}\parenBar{g_1}{g_4} = (.154, .154, .154, .179, .179, .179)$, while the stationary distribution of $g_1$ is $(.462, .538)$, both rounded to 3 decimal places.}
    \label{fig:SyncCompM2&T}
\end{figure}

\begin{figure}
    \centering
    \subfigure[$g_5$]{\includegraphics[scale=1.]{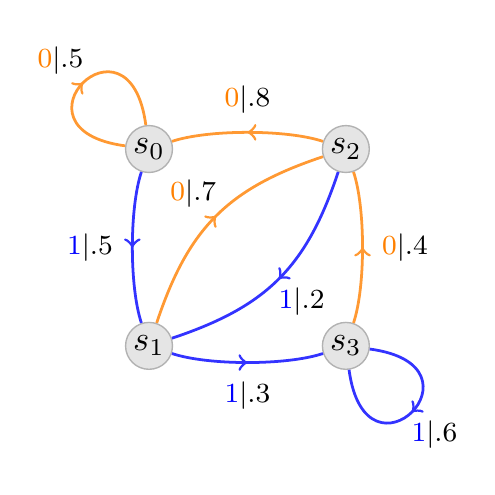}} \hspace{1.cm}
    \subfigure[$g^{*}_{\textrm{c}}\parenBar{g_5}{g_2}$]{\includegraphics[scale=.6]{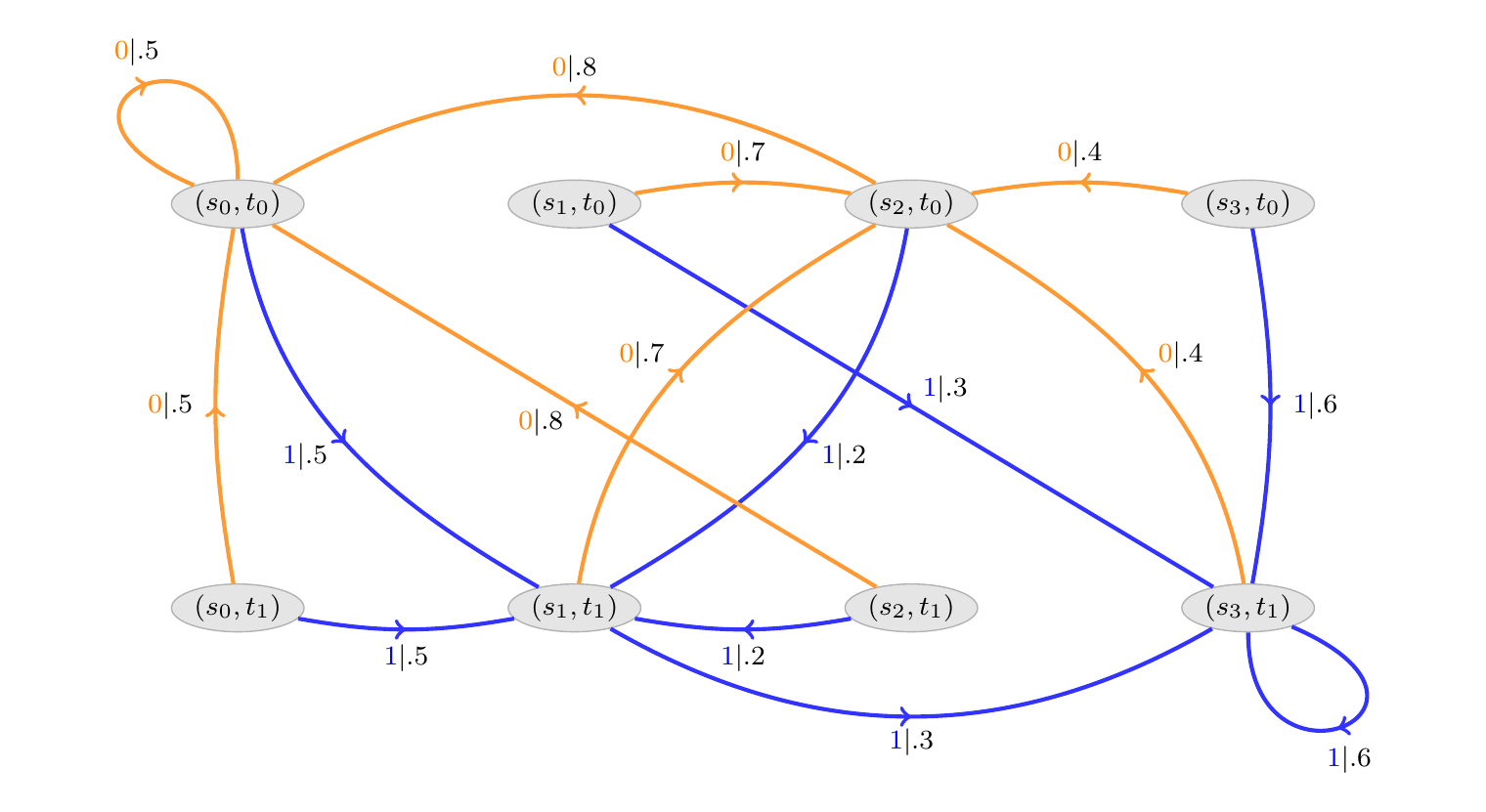}}
    \subfigure[strongly connected component of $g^{*}_{\textrm{c}}\parenBar{g_5}{g_2}$]{\includegraphics[scale=.6]{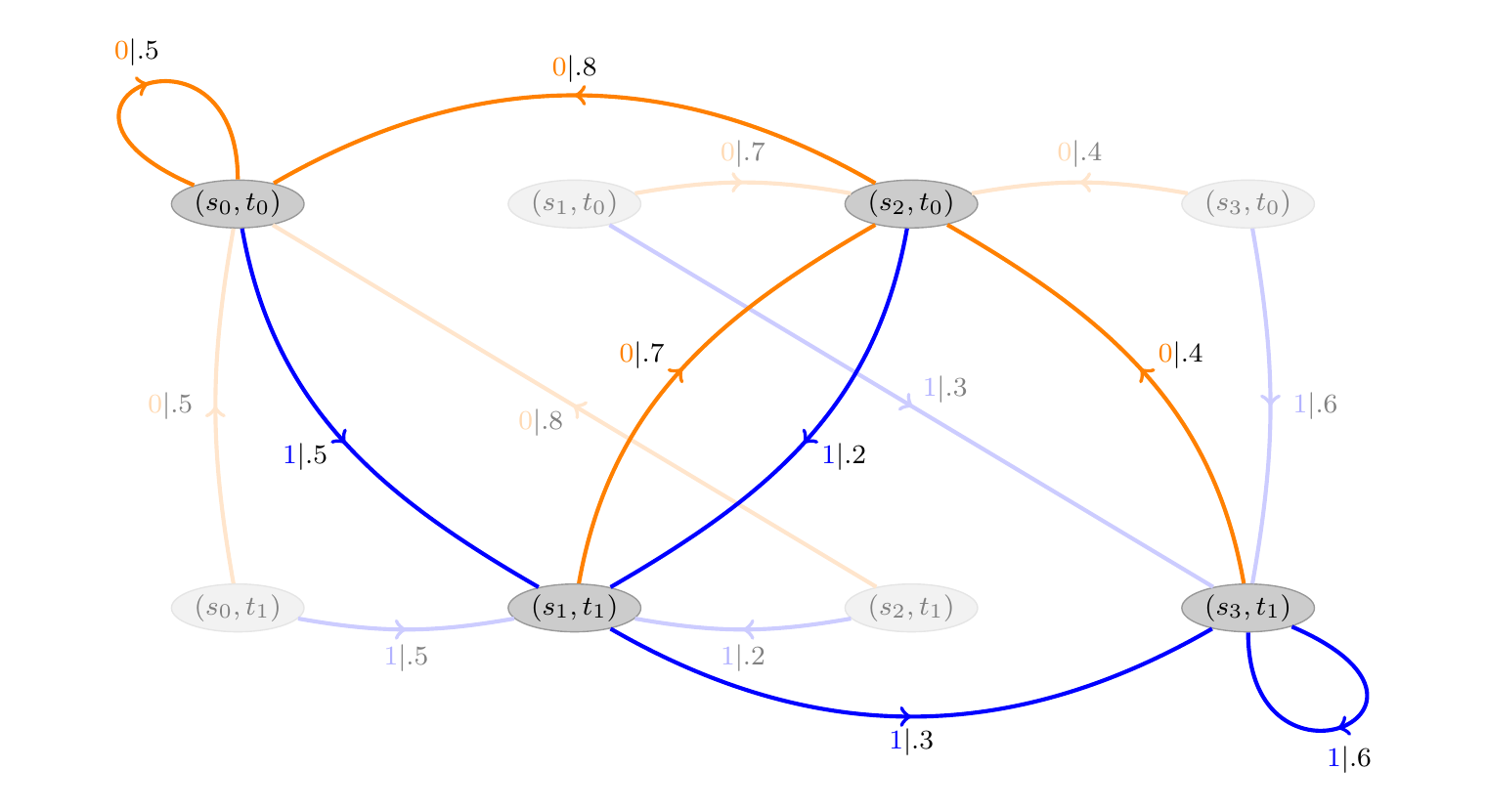}}
    \caption{$g_{\textrm{c}}\parenBar{g_5}{g_2}$ is the strongly connected component of $g^{*}_{\textrm{c}}\parenBar{g_5}{g_2}$ and it is equal to $g_5$.}
    \label{fig:SyncCompM4&M2}
\end{figure}

\subsection{Proof for Theorem \ref{thm:KLDivergence}} \label{Appendix:KL}
Before we can prove Theorem \ref{thm:KLDivergence}, we first study synchronous compositions in more detail. Specifically, we shall show that $\mathbf{p}_{g_c}(\mathbf{x})$ is independent of the choice of absorbing strongly connected component in $g^{*}_{\textrm{c}}\parenBar{g_1}{g_2}$. Essentially,  $g_{\textrm{c}}\parenBar{g_1}{g_2}$ is equivalent (to be defined later) to $g_1$, which is key to the usage of synchronous composition in the proof of Theorem \ref{thm:KLDivergence}.
\begin{definition}\label{def:T_s}
Let $g_1=\paren{\S, \X, \mathsf{T}_1, \mathsf{P}_1}$ and $g_2=\paren{\T, \X, \mathsf{T}_2, \mathsf{P}_2}$ be two PFSAs with the same alphabet and let $g_{\textrm{c}}\parenBar{g_1}{g_2}$ be the synchronous composition of $g_1$ and $g_2$. Suppose that the state space of $g_{\textrm{c}}\parenBar{g_1}{g_2}$ is $\U\subset\S\times\T$. We then define $\T_{s} = \set{t\in\T: (s, t)\in\U}$.
\end{definition}
We provided several examples of synchronous compositions in Figs.~\ref{fig:SyncCompM2&M2} to \ref{fig:SyncCompM4&M2}. We note that, in Fig.~\ref{fig:SyncCompM2&S} and \ref{fig:SyncCompM2&T}, the compositions $g^*_{\textrm{c}}$ are naturally strongly connected, while those in Fig.~\ref{fig:SyncCompM2&M2} and \ref{fig:SyncCompM4&M2} are not. For $g_{\textrm{c}}\parenBar{g_1}{g_2}$ in Fig.~\ref{fig:SyncCompM2&M2}, we have $\T_{s_0}=\set{t_0}$ and $\T_{s_1} = \set{t_1}$, and for $g_{\textrm{c}}\parenBar{g_5}{g_2}$ in Fig.~\ref{fig:SyncCompM4&M2}, we have $\T_{s_0}=\set{t_0}$, $\T_{s_1} = \set{t_1}$, $\T_{s_2}=\set{t_0}$, and $\T_{s_3} = \set{t_1}$. 

\begin{proposition}\label{prop:stationaryOfComp}
Let $g_{\textrm{c}} = g_{\textrm{c}}\parenBar{g_1}{g_2}$ be any absorbing strongly connected component of $g^{*}_{\textrm{c}}\parenBar{g_1}{g_2}$ and let $\mathbf{p}_{g_{\textrm{c}}}$ be its stationary distribution. Then we have $\sum_{t\in\T_s}\paren{\mathbf{p}_{g_{\textrm{c}}}}_{(s, t)} = \paren{\mathbf{p}_{g_1}}_{s}$.
\end{proposition}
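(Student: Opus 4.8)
The plan is to recognize the projection onto the first coordinate as a \emph{strong lumping} of the Markov chain associated with $g_{\textrm{c}}$, and then to push the stationarity equation $\mathbf{p}_{g_{\textrm{c}}}^{T}P_{g_{\textrm{c}}} = \mathbf{p}_{g_{\textrm{c}}}^{T}$ forward along this lumping. First I would write the state-to-state transition matrix of $g_{\textrm{c}}$ explicitly: for $(s,t),(s',t')\in\U$,
\[
    \paren{P_{g_{\textrm{c}}}}_{(s,t),(s',t')} = \sum_{\substack{x\in\X:\ \mathsf{T}_1(s,x)=s',\\ \mathsf{T}_2(t,x)=t'}} \mathsf{P}_1(s,x),
\]
which uses that the generating probabilities in the synchronous composition are inherited from $g_1$ alone (recall $\mathsf{P}_{\textrm{c}}((s,t),x)=\mathsf{P}_1(s,x)$).

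The key step is the lumpability identity: for every $(s,t)\in\U$ and every $s'\in\S$,
\[
    \sum_{t'\in\T_{s'}}\paren{P_{g_{\textrm{c}}}}_{(s,t),(s',t')} = \sum_{x\in\X:\ \mathsf{T}_1(s,x)=s'}\mathsf{P}_1(s,x) = \paren{P_{g_1}}_{s,s'}.
\]
This is where the hypothesis that $\U$ is an \emph{absorbing} strongly connected component is essential, and I expect it to be the main obstacle. Because $\U$ has no out-going edges, every symbol $x$ with $\mathsf{T}_1(s,x)=s'$ sends $(s,t)$ to a state $(s',\mathsf{T}_2(t,x))$ that again lies in $\U$, so $\mathsf{T}_2(t,x)\in\T_{s'}$ and no such symbol is dropped when the sum is restricted to $t'\in\T_{s'}$. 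Thus the total mass transported from $(s,t)$ into the fiber over $s'$ is independent of $t$ and equals $\paren{P_{g_1}}_{s,s'}$; without this closure property the left-hand side could be strictly smaller.

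With the lumpability identity in hand, set $\mathbf{q}_s \coloneqq \sum_{t\in\T_s}\paren{\mathbf{p}_{g_{\textrm{c}}}}_{(s,t)}$. Since the fibers $\set{(s,t):t\in\T_s}$ partition $\U$, the vector $\mathbf{q}$ is a probability vector. A direct computation then gives, for each $s'\in\S$,
\begin{align*}
    \paren{\mathbf{q}^{T}P_{g_1}}_{s'} &= \sum_{s\in\S}\mathbf{q}_s\paren{P_{g_1}}_{s,s'} = \sum_{s\in\S}\sum_{t\in\T_s}\paren{\mathbf{p}_{g_{\textrm{c}}}}_{(s,t)}\sum_{t'\in\T_{s'}}\paren{P_{g_{\textrm{c}}}}_{(s,t),(s',t')}\\
    &= \sum_{t'\in\T_{s'}}\sum_{(s,t)\in\U}\paren{\mathbf{p}_{g_{\textrm{c}}}}_{(s,t)}\paren{P_{g_{\textrm{c}}}}_{(s,t),(s',t')} = \sum_{t'\in\T_{s'}}\paren{\mathbf{p}_{g_{\textrm{c}}}}_{(s',t')} = \mathbf{q}_{s'},
\end{align*}
where the second line uses the stationarity $\mathbf{p}_{g_{\textrm{c}}}^{T}P_{g_{\textrm{c}}} = \mathbf{p}_{g_{\textrm{c}}}^{T}$. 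Hence $\mathbf{q}$ is a stationary distribution for $P_{g_1}$. Since $g_1$ is strongly connected, its stationary distribution is unique, so $\mathbf{q}=\mathbf{p}_{g_1}$, which is exactly the claimed identity. The argument is manifestly independent of which absorbing strongly connected component is chosen as $g_{\textrm{c}}$, which also previews the equivalence $p_{g_{\textrm{c}}}=p_{g_1}$ used before Theorem~\ref{thm:KLDivergence}.
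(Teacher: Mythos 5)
Your proof is correct, but it takes a genuinely different route from the paper's. The paper argues ergodically: it runs the composed chain, invokes stationarity and ergodicity of $g_{\textrm{c}}$ to get $\frac{n_{s',t'}}{n}\to\paren{\mathbf{p}_{g_{\textrm{c}}}}_{(s',t')}$, sums the empirical frequencies over the fiber $\T_{s'}$, and then identifies the first-coordinate visit frequencies with those of the $g_1$ chain (whose limit is $\paren{\mathbf{p}_{g_1}}_{s'}$). You instead prove a strong-lumpability identity, $\sum_{t'\in\T_{s'}}\paren{P_{g_{\textrm{c}}}}_{(s,t),(s',t')}=\paren{P_{g_1}}_{s,s'}$ for all $(s,t)\in\U$, push the stationarity equation forward along the projection, and conclude by uniqueness of the stationary distribution of the strongly connected $g_1$. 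Your argument is purely algebraic and finite: it avoids the ergodic theorem and makes explicit both where the absorbing hypothesis enters (closure of $\U$ guarantees $\mathsf{T}_2(t,x)\in\T_{s'}$, so no transition mass escapes the fiber) and where strong connectivity of $g_1$ is used (uniqueness of $\mathbf{p}_{g_1}$) --- two points the paper's proof leaves implicit, the latter in the unjustified step ``the left-hand side converges to $\paren{\mathbf{p}_{g_1}}_s$.'' As a bonus, your argument shows every fiber $\T_s$ is nonempty, since $\mathbf{q}=\mathbf{p}_{g_1}$ is strictly positive. What the paper's route buys is uniformity of technique: the same empirical-frequency argument reappears in the proof of Theorem~\ref{thm:KLDivergence}, so the authors reuse one tool rather than introduce lumpability.
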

\begin{proof}
For any fixed initial state $(s, t)$ and any sequence of symbols $\mathbf{x}^{n}\in\X^n$, consider the sequence of states of the synchronous composition
\[
(s, t), \paren{\mathsf{T}_1\paren{s,x_{1}}, \mathsf{T}_2\paren{t, x_{1}}}, \dots, \paren{\mathsf{T}_1\paren{s,\mathbf{x}^{n}}, \mathsf{T}_2\paren{t, \mathbf{x}^{n}}}.
\]
Let $n_{s',t'}$ be the number of indices $i = 1, \dots, n$ such that $\paren{\mathsf{T}_1\paren{s,\mathbf{x}^{i}}, \mathsf{T}_2\paren{t, \mathbf{x}^{i}}} = (s', t')$. Since the associated stochastic process on states induced by $g_{\textrm{c}}$ is stationary and ergodic, we have $\frac{n_{s',t'}}{n} \to \paren{\mathbf{p}_{g_{\textrm{c}}}}_{(s',t')}$ as $n\to \infty$ in probability. 
Consequently, $$\sum_{t'\in \T_s}\frac{n_{s',t'}}{n} \to \sum_{t'\in \T_s}\paren{\mathbf{p}_{g_{\textrm{c}}}}_{(s',t')}.$$
Noticing that the left-hand side converges to $ \paren{\mathbf{p}_{g_1}}_{s}$, we obtain the result.
\end{proof}
Figs.~\ref{fig:SyncCompM2&S} and \ref{fig:SyncCompM2&T} provide examples of the proposition above. 
\begin{theorem}\label{Thm:Equivalence}
Let $g_{\textrm{c}} = g_{\textrm{c}}\parenBar{g_1}{g_2}$ be any absorbing strongly connected component of $g^{*}_{\textrm{c}}\parenBar{g_1}{g_2}$. Then we have $g_{\textrm{c}}$ is equivalent to $g_1$, in the sense that $p_{g_{\textrm{c}}}\paren{\mathbf{x}} = p_{g_1}\paren{\mathbf{x}}$ for $\mathbf{x}\in\X^{*}$.
\end{theorem}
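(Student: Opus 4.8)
The plan is to exploit the defining feature of the synchronous composition: the symbol-generating probabilities $\mathsf{P}_{\textrm{c}}$ depend only on the $g_1$-coordinate of a composite state, so the $g_2$-coordinate merely records where $g_2$ would be and never influences the likelihood of an emitted string. First I would compute the probability of generating a string from a \emph{fixed} initial state. Since $g_1$ and $g_2$ have deterministic transition functions, starting $g_{\textrm{c}}$ at $(s,t)$ and emitting $\mathbf{x}=x_1\cdots x_n$ has probability
\[
    \prod_{i=1}^{n}\mathsf{P}_{\textrm{c}}\paren{\mathsf{T}_{\textrm{c}}\paren{(s,t),\mathbf{x}^{i-1}}, x_i} = \prod_{i=1}^{n}\mathsf{P}_1\paren{\mathsf{T}_1\paren{s,\mathbf{x}^{i-1}}, x_i},
\]
where the equality uses $\mathsf{T}_{\textrm{c}}((s,t),\mathbf{x}^{i-1})=(\mathsf{T}_1(s,\mathbf{x}^{i-1}),\mathsf{T}_2(t,\mathbf{x}^{i-1}))$ together with $\mathsf{P}_{\textrm{c}}((s',t'),x)=\mathsf{P}_1(s',x)$. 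The right-hand side is exactly the probability that $g_1$, started in state $s$, emits $\mathbf{x}$; denote it $\pi(s,\mathbf{x})$. The crucial observation is that this quantity is \emph{independent of} $t$.

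Next I would assemble the full likelihood by averaging over the stationary initial state. Because $g_{\textrm{c}}$ is a strongly connected component it has a unique stationary distribution $\mathbf{p}_{g_{\textrm{c}}}$, and by the generation rule of Section~\ref{sec:PFSA} we have
\[
    p_{g_{\textrm{c}}}(\mathbf{x}) = \sum_{(s,t)\in\U}\paren{\mathbf{p}_{g_{\textrm{c}}}}_{(s,t)}\,\pi(s,\mathbf{x}).
\]
Grouping the sum by its first coordinate and pulling out the $t$-independent factor $\pi(s,\mathbf{x})$ gives
\[
    p_{g_{\textrm{c}}}(\mathbf{x}) = \sum_{s\in\S}\pi(s,\mathbf{x})\sum_{t\in\T_s}\paren{\mathbf{p}_{g_{\textrm{c}}}}_{(s,t)}.
\]

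Finally I would invoke Proposition~\ref{prop:stationaryOfComp}, which states precisely that $\sum_{t\in\T_s}(\mathbf{p}_{g_{\textrm{c}}})_{(s,t)}=(\mathbf{p}_{g_1})_s$, to collapse the inner sum to the stationary weight of $g_1$. This yields $p_{g_{\textrm{c}}}(\mathbf{x})=\sum_{s\in\S}(\mathbf{p}_{g_1})_s\,\pi(s,\mathbf{x})=p_{g_1}(\mathbf{x})$, which is the claimed equivalence; since $\mathbf{x}\in\X^{*}$ was arbitrary, the proof is complete. I expect the only delicate step to be the $t$-independence of $\pi(s,\mathbf{x})$, as that is what makes the grouping by first coordinate legitimate and hands the remaining bookkeeping to Proposition~\ref{prop:stationaryOfComp}. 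Establishing it rigorously is best done by a short induction on $|\mathbf{x}|$, tracking the state of $g_{\textrm{c}}$ alongside that of $g_1$ and using at each step both the coordinatewise action of $\mathsf{T}_{\textrm{c}}$ and the fact that $\mathsf{P}_{\textrm{c}}$ reads off only the $g_1$-coordinate.
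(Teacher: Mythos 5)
Your proof is correct and follows essentially the same route as the paper's: both hinge on the observation that $\mathsf{P}_{\textrm{c}}$ reads only the $g_1$-coordinate (so the conditional likelihood from $(s,t)$ is $t$-independent) and both invoke Proposition~\ref{prop:stationaryOfComp} to collapse the sum over $\T_s$. The only difference is organizational: you establish $p_{g_{\textrm{c}}}(\mathbf{x}\mid(s,t))=p_{g_1}(\mathbf{x}\mid s)$ directly via the product formula and then group by the first coordinate, whereas the paper runs a single induction on the weighted sum $\sum_{t\in\T_s}p_{g_{\textrm{c}}}(\mathbf{x}\mid(s,t))(\mathbf{p}_{g_{\textrm{c}}})_{(s,t)}$.
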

\begin{proof}
We first show
\begin{equation}\label{eq:equivalentTo(g1)}
    \sum_{t\in\T_s}p_{g_{\textrm{c}}}\paren{\mathbf{x}|(s, t)}\paren{\mathbf{p}_{g_{\textrm{c}}}}_{(s, t)} = p_{g_1}\paren{\mathbf{x}|s}\paren{\mathbf{p}_{g_1}}_{s}
\end{equation}
by induction on the length of $\mathbf{x}$. We first note that the base case in which $\mathbf{x}$ is the empty sequence is given by Proposition \ref{prop:stationaryOfComp}. Now assume that \eqref{eq:equivalentTo(g1)} holds for $\abs{\mathbf{x}} = n$. Follow the notation as in Definition \ref{def:T_s}, we have for sequence $\mathbf{x}x$
\begin{align*}
    \sum_{t\in\T_s}p_{g_{\textrm{c}}}\paren{\mathbf{x}x|(s, t)}\paren{\mathbf{p}_{g_{\textrm{c}}}}_{(s, t)} &= \sum_{t\in\T_s}p_{g_{\textrm{c}}}\paren{\mathbf{x}|(s, t)}p_{g_{\textrm{c}}}\paren{x|\mathbf{x}, (s, t)}\paren{\mathbf{p}_{g_{\textrm{c}}}}_{(s, t)}\\
    &=\sum_{t\in\T_s}p_{g_{\textrm{c}}}\paren{\mathbf{x}|(s, t)}\mathsf{P}_1\paren{\mathsf{T}_1(s, \mathbf{x}),x}\paren{\mathbf{p}_{g_{\textrm{c}}}}_{(s, t)}\\
    &=\paren{\sum_{t\in\T_s}p_{g_{\textrm{c}}}\paren{\mathbf{x}|(s, t)}\paren{\mathbf{p}_{g_{\textrm{c}}}}_{(s, t)}}\mathsf{P}_1\paren{\mathsf{T}_1(s, \mathbf{x}),x}\\
    &\stackrel{(a)}{=}p_{g_1}\paren{\mathbf{x}|s}\paren{\mathbf{p}_{g_1}}_{s}\mathsf{P}_1\paren{\mathsf{T}_1(s, \mathbf{x}),x}\\
    &=p_{g_1}\paren{\mathbf{x}x|s}\paren{\mathbf{p}_{g_1}}_{s},
\end{align*}
where equality in $(a)$ follows from the induction hypothesis. Now we can write
\begin{align*}
    p_{g_{\textrm{c}}}\paren{\mathbf{x}}&=\sum_{s\in\S}\sum_{t\in\T_s}p_{g_{\textrm{c}}}\paren{\mathbf{x}|(s, t)}\paren{\mathbf{p}_{g_{\textrm{c}}}}_{(s, t)}= \sum_{s\in\S}p_{g_1}\paren{\mathbf{x}|s}\paren{\mathbf{p}_{g_1}}_{s} = p_{g_1}\paren{\mathbf{x}},
\end{align*}
from which the result follows.
\end{proof}

\begin{proof}[Proof for Theorem \ref{thm:KLDivergence}]

We use the same notation as in Appendix~\ref{Appendix_Entropy}. We start the proof by defining two distributions on the Cartesian product $\S\times\T\times\X^{n}$. Let 
\begin{align*}
    g_{12} \coloneqq g_{\textrm{c}}\parenBar{g_1}{g_2},\quad g_{21} \coloneqq g_{\textrm{c}}\parenBar{g_2}{g_1},
\end{align*} 
and $\mathbf{p}_{12}$ and  $\mathbf{p}_{21}$ be the stationary distributions of $g_{12}$ and $g_{21}$, respectively. Here we make sure that we choose the same absorbing strongly connected component for both compositions. We notice that $g_{12}$ and $g_{21}$ induce two distributions $p_{12}$, and $p_{21}$ on $\S\times\T\times\X^{n}$ given by  $p_{12}(s, t, \mathbf{x}^{n-1})=p_{12}(s, t)p_{12}(\mathbf{x}^{n-1}|s, t)$ and $p_{21}(s, t, \mathbf{x}^{n-1})=p_{21}(s, t)p_{21}(\mathbf{x}^{n-1}|s, t)$ where
\begin{align*}
    &p_{12}(s, t) = \paren{\mathbf{p}_{12}}_{(s,t)},\quad p_{21}(s, t) = \paren{\mathbf{p}_{21}}_{(s,t)},\\
    &p_{12}\paren{\mathbf{x}^n | s, t} = p_{g_1}\paren{\mathbf{x}^{n}|s} = \prod_{i = 1}^{n}\mathsf{P}_1\paren{\mathsf{T}_1\paren{s, \mathbf{x}^{i-1}}, x_i},\\
    &p_{21}\paren{\mathbf{x}^n | s, t} = p_{g_2}\paren{\mathbf{x}^{n}|t} = \prod_{i = 1}^{n}\mathsf{P}_2\paren{\mathsf{T}_2\paren{t, \mathbf{x}^{i-1}}, x_i}.
\end{align*}
Letting $p_{12}(\mathbf{x}^n)$ ($p_{12}(\mathbf{x}^{n-1})$) be the marginal of $p_{12}$ over $\X^n$ (resp. $\X^{n-1}$), we can write using the chain rule of KL divergence (see e.g., \cite[Theorem 2.5.3]{Cover:EIT}) that
\begin{align*}
    &D_{\textrm{KL}}\parenBar{p_{12}\paren{X^n}}{{p_{21}\paren{X^n}}} - D_{\textrm{KL}}\parenBar{p_{12}\paren{X^{n-1}}}{{p_{21}\paren{X^{n-1}}}}\\
    =&\bracket{D_{\textrm{KL}}\parenBar{p_{12}\paren{S, T, X^n}}{{p_{21}\paren{S, T, X^n}}} - D_{\textrm{KL}}\parenBar{p_{12}\paren{S, T|X^n}}{{p_{21}\paren{S, T| X^n}}}}\\ &- \bracket{D_{\textrm{KL}}\parenBar{p_{12}\paren{S, T,X^{n-1}}}{{p_{21}\paren{S, T, X^{n-1}}}}- D_{\textrm{KL}}\parenBar{p_{12}\paren{S, T| X^{n-1}}}{{p_{21}\paren{S, T| X^{n-1}}}}}\\
    =&\bracket{D_{\textrm{KL}}\parenBar{p_{12}\paren{S, T, X^{n}}}{{p_{21}\paren{S, T, X^{n}}}} - D_{\textrm{KL}}\parenBar{p_{12}\paren{S, T,X^{n-1}}}{{p_{21}\paren{S, T, X^{n-1}}}}}\\
    &-\bracket{D_{\textrm{KL}}\parenBar{p_{12}\paren{S, T|X^n}}{{p_{21}\paren{S, T| X^n}}} - D_{\textrm{KL}}\parenBar{p_{12}\paren{S, T| X^{n-1}}}{{p_{21}\paren{S, T| X^{n-1}}}}}\\
    =&\underbrace{D_{\textrm{KL}}\parenBar{p_{12}\paren{X_n\left|S, T,X^{n-1}\right.}}{{p_{21}\paren{X_n\left|S, T, X^{n-1}\right.}}}}_{\eqqcolon C_n}\\
    &-\bracket{\underbrace{D_{\textrm{KL}}\parenBar{p_{12}\paren{S, T|X^n}}{{p_{21}\paren{S, T| X^n}}}}_{\eqqcolon D_n} - \underbrace{D_{\textrm{KL}}\parenBar{p_{12}\paren{S, T|X^{n-1}}}{{p_{21}\paren{S, T| X^{n-1}}}}}_{{D_{n-1}}}}.
\end{align*}
We first show that $C_n$ is a constant that equals the desired quantity. Notice that for a fixed initial state $(s, t)\in \S\times \T$ and a fixed sequence $\mathbf{x}^{n-1}\in \X^{n-1}$ we have $\mathsf{T}_{\textrm{c}}((s, t), \mathbf{x}^{n-1}) = \paren{\mathsf{T}_1(s, \mathbf{x}^{n-1}), \mathsf{T}_2(t, \mathbf{x}^{n-1})}$ and hence
\begin{align*}
    C_n &= \sum_{s', t'}D_{\textrm{KL}}\parenBar{\paren{\widetilde{P}_{g_1}}_{s',\cdot}}{\paren{\widetilde{P}_{g_2}}_{t',\cdot}}\cdot p_{12}\set{\mathsf{T}_1\paren{s, \mathbf{x}^n} = s', \mathsf{T}_2\paren{t, \mathbf{x}^n} = t'}\\
    &= \sum_{s', t'}D_{\textrm{KL}}\parenBar{\paren{\widetilde{P}_{g_1}}_{s',\cdot}}{\paren{\widetilde{P}_{g_2}}_{t',\cdot}}\cdot {\mathbf{p}_{12}}_{(s',t')}.
\end{align*}

We next show that $D_n$ converges in probability and in particular $D_n-D_{n-1}\to 0$. For a fixed initial state $(s, t)$ and a sequence $\mathbf{x}^{n}$, consider the sequence of states $s, \mathsf{T}_1\paren{s, \mathbf{x}^1}, \mathsf{T}_1\paren{s, \mathbf{x}^2}, \dots, \mathsf{T}_1\paren{s, \mathbf{x}^n}$, and let $n_{s', x} = n_{s', x}(s)$ denote the number of indices $i$ such that $\mathsf{T}_1(s, \mathbf{x}^{i-1}) = s'$ and $x_i = x$. We have for all $t\in\T_s$ 
\begin{align*}
    p_{12}\paren{\mathbf{x}^{n}|s, t} = \prod_{i = 1}^{n}\mathsf{P}_1\paren{\mathsf{T}_1\paren{s, \mathbf{x}^{i-1}}, x_i} = \prod_{s', x}\mathsf{P}_1\paren{s', x}^{n_{s', x}}
    =2^{\sum_{s', x}n_{s', x}\log\mathsf{P}_1\paren{s', x}} = 2^{n\sum_{s', x}\frac{n_{s', x}}{n}\log\mathsf{P}_1\paren{s', x}}.
\end{align*}
Since the associated stochastic process on states is stationary and ergodic, we have $\frac{n_{s', x}}{n}\rightarrow\paren{\mathbf{p}_{g_1}}_{s'}\mathsf{P}_1(s', x)$ in probability as $n\rightarrow\infty$, and hence $p_{12}\paren{\mathbf{x}^{n}|s, t}\rightarrow{2^{-nH(g_1)}}$ in probability and \emph{independent} of the initial state $s$. This implies $\paren{p_{12}\paren{s, t\left|\mathbf{x}^{n}\right.}}_{(s, t)}$ and $\paren{p_{21}\paren{s, t \left|\mathbf{x}^{n}\right.}}_{(s, t)}$ converge in probability to the stationary distribution $\mathbf{p}_{12}$ and $\mathbf{p}_{21}$, respectively, which shows that $D_n$ converges and hence the theorem follows.
\end{proof}

\end{document}